\documentclass{article}



     \usepackage[preprint,nonatbib]{neurips_2025}



\usepackage[utf8]{inputenc} 
\usepackage[T1]{fontenc}    
\usepackage{hyperref}       
\usepackage{url}            
\usepackage{booktabs}       
\usepackage{amsfonts}       
\usepackage{nicefrac}       
\usepackage{microtype}      
\usepackage{xcolor}         

\usepackage{enumerate}
\usepackage{graphicx}
\usepackage{amssymb}
\usepackage{mathtools}
\usepackage{subfig}
\usepackage{xspace}
\usepackage[lined,linesnumbered,ruled,commentsnumbered]{algorithm2e} 
\usepackage{colonequals}
\usepackage[mathscr]{euscript}
\usepackage{fixltx2e}    
\usepackage{amsthm}
\usepackage{mathrsfs}
\usepackage{caption}
\usepackage{float}
\usepackage{relsize}
\usepackage{fnpos} 
\usepackage{wrapfig}

\usepackage{tikz}
\usepackage{pgfplots}
\usetikzlibrary{arrows,shapes,backgrounds,plotmarks,positioning}
\pgfplotsset{compat=newest}                         
\pgfplotsset{plot coordinates/math parser=false}
\newlength\figureheight
\newlength\figurewidth


\newtheorem{theorem}{Theorem}

\newtheorem{remark}{Remark}


\newcommand{\E}{\mathbb{E}}
\newcommand{\X}{\mathcal{X}}
\newcommand{\Y}{\mathcal{Y}}
\newcommand{\Z}{\mathcal{Z}}

\newcommand{\Set}[1]{\{#1\}}


\newcommand*\dif{\mathop{}\!\mathrm{d}}

\title{$\alpha$-GAN by R\'{e}nyi Cross Entropy}

%

\author{%
  Ni~Ding
  \\
  University of Auckland\\
  New Zealand \\
  \texttt{ni.ding@auckland.ac.nz} \\
   \And
   Miao Qiao \\
   University of Auckland \\
   New Zealand \\
   \texttt{miao.qiao@auckland.ac.nz} \\
   \AND
   Jiaxing Xu \\
   Nanyang Technological University \\
   Singapore \\
   \texttt{jiaxing003@e.ntu.edu.sg} \\
   \And
   Yiping Ke \\
   Nanyang Technological University \\
   Singapore \\
   \texttt{ypke@ntu.edu.sg} \\
   \And
   Xiaoyu Zhang \\
   University of Auckland \\
   New Zealand \\
   \texttt{xsha710@aucklanduni.ac.nz} \\
}

\begin{document}

\maketitle

\begin{abstract}	
	This paper proposes $\alpha$-GAN, a generative adversarial network using R\'{e}nyi measures. The value function is formulated, by R\'{e}nyi cross entropy, as an expected certainty measure incurred by the discriminator's soft decision as to where the sample is from, true population or the generator. 
	The discriminator tries to maximize the R\'{e}nyi certainty about sample source, while the generator wants to reduce it by injecting fake samples. This forms a min-max problem with the solution parameterized by the R\'{e}nyi order $\alpha$.   
	This $\alpha$-GAN reduces to vanilla GAN at $\alpha = 1$, where the value function is exactly the binary cross entropy. 
	The optimization of $\alpha$-GAN is over probability (vector) space. 
	It is shown that the gradient is exponentially enlarged when R\'{e}nyi order is in the range $\alpha \in (0,1)$. This makes convergence faster, which is verified by experimental results. A discussion shows that choosing $\alpha \in (0,1)$ may be able to solve some common problems, e.g., vanishing gradient. A following observation reveals that this range has not been fully explored in the existing R\'{e}nyi version GANs. 
\end{abstract}

\section{Introduction}

Generative model produces new data points that simulate the true distribution. It is a powerful tool in learning underlying data structure and has been widely used nowadays in many areas, e.g., healthcare, data augmentation, natural language processing. 
Generative adversarial network (GAN)~\cite{Goodfellow2014_VanillaGAN} is one type of generative models that involves two parties, the generator and the discriminator. The goal is to train the generator to learn the probability distribution of a population, while the discriminator is there to help the generator to reach this goal. However, the optimization is designed in a competitive manner: 
the discriminator tries to minimize the estimation loss when determining whether the sample is generated from the actual population or the generator, while the generator tries to fabricate samples good enough to fool the discriminator to deteriorate its performance. 
This naturally forms an adversarial network (more precisely, a competitive zero-sum game), where the training is done by an alternating maximization and minimization process. It is known that GAN is good at generating complex, high-resolution but realistic images~\cite{Zhang2022CVPR,Liao2022CVPR}.

When GAN was proposed~\cite{Goodfellow2014_VanillaGAN}, the value function is formulated as a superposition of competing objectives between the discriminator and generator. 
This formulation implies a connection to cross entropy, but is more explicitly interpreted as a Jensen-Shannon divergence measuring the dissimilarity between true data probability and generator's distribution. The overall framework uses the expected log-loss, as for Shannon entropy.  
Following the divergence interpretation, there have been least square GAN, $\chi^2$-GAN~\cite{Tao2018}, etc., proposed in the literature. All of them can be generalized by $f$-GAN~\cite{Nowozin2016_fGAN} using the variational representation of $f$-divergence~\cite{Nguyen2010_FDiv}. 
Recently, there are various attempts, e.g.,~\cite{Bhatia2021_KthOrder_Renyi_GAN,Sarraf2021_RGAN,Welfert2024,Veiner2024AlphaGAN,Kurri2022ISIT_AlphaGAN,Kurri2021ITW_AlphaGAN}, trying to generalize vanilla GAN~\cite{Goodfellow2014_VanillaGAN} to $\alpha$-GAN, based on the fact that R\'{e}nyi entropy generalizes Shannon entropy but provides various levels of uncertainty measures via the R\'{e}nyi order $\alpha$. 
Instead of changing $f$ function in $f$-GAN which usually involves rebuilding the model, this generalization introduces a hyperparameter $\alpha$ that is adjustable by the user. 

It is discovered in~\cite{Sarraf2021_RGAN,Welfert2024,Veiner2024AlphaGAN} that tuning $\alpha$ helps deal with some common problems in GAN training, e.g., model collapse, vanishing gradient. 
This is consistent with the observations in other machine learning studies. For example, the Tsallis entropy is introduced in~\cite{LeeS2018} to improve the imitation learning; it is shown in~\cite{Zhang2018} that a Box-Cox transformation tunable between mean absolute error and cross entropy enhances robustness against noisy labels for deep neural networks. 
However, the difficulty is that there does not exist a well-defined R\'{e}nyi cross entropy. For example, the minimum of the R\'{e}nyi cross entropy in~\cite{Bhatia2021_KthOrder_Renyi_GAN} is not R\'{e}nyi entropy; the $\alpha$-loss adopted in~\cite{Welfert2024,Veiner2024AlphaGAN,Kurri2022ISIT_AlphaGAN,Kurri2021ITW_AlphaGAN} is the Arimoto divergence that is not defined at $\alpha = 0$ (see Section~\ref{sec:discuss} for the explanation). 
In addition, the purpose of $\alpha$ has not been fully studied. 
There are some analysis on $\alpha$ in \cite{Kurri2022ISIT_AlphaGAN, Sarraf2021_RGAN}. But, none of them explains exactly how this R\'{e}nyi order can help the model training.  

In this paper, we propose an $\alpha$-GAN and formulate the value function using the R\'{e}nyi cross entropy in~\cite{Ding2024_ISIT}. It measures the certainty incurred by the discriminator's soft decision wrt the randomness created jointly by the true distribution and generator's probability. 
We show that the value function is a quasi-arithmetic mean that is well defined in the R\'{e}nyi order range $\alpha \in [0,\infty)$. It reduces to binary cross entropy at $\alpha = 1$, where $\alpha$-GAN becomes vanilla GAN. 
The discriminator maximizes the value function (i.e., the estimation certainty) and the maximum equals to the negative of the R\'{e}nyi entropy. In the next step, the generator minimizes the value or discriminator's certainty. It is equivalent to the maximization of R\'{e}nyi entropy and the answer is uniform distribution, where the generator reproduces true distribution. 
For the alternating gradient ascent and descent steps in the training, we show that the gradient magnitude is exponentially decreasing in $\alpha$. As $\alpha$ deceases towards $0$, the gradient is enlarged greatly. It suggests an accelerated gradient approach for solving the vanishing gradient problem and improving the learning speed by choosing $\alpha \in (0,1)$. 
This is verified by experimental results on 1D Gaussian and MNIST, where the $\alpha$-GAN converges faster at $\alpha = 0.05$ and $0.1$. 
We finally review and compare the existing works on $\alpha$-GAN in~\cite{Welfert2024,Veiner2024AlphaGAN,Kurri2022ISIT_AlphaGAN,Kurri2021ITW_AlphaGAN}, where we point out that the range $\alpha\in(0,1)$ has not be fully explored previously.

\paragraph{Notation} 

Capital and lowercase letters denote random variable (r.v.) and its elementary event or instance, respectively, e.g., $x$ is an instance of $X$. Calligraphic letters denote sets, e.g., $\X$ refers to the alphabet of $X$. We assume finite countable alphabet.
Let $P_{X}(x)$ be the probability of outcome $X = x$ and $P_X  = (P_{X}(x) \colon x \in \X)$ be the probability mass function.
The expected value of $f(X)$ w.r.t. probability $P_{X}$ is denoted by $\E_{X \sim P_X}[f(X)] = \sum_{x \in \X} P_{X}(x) f(x)$.
For the conditional probability $P_{Y|X} = (P_{Y|X}(y|x) \colon x\in\X, y\in\Y)$, $P_{Y|X=x} = (P_{Y|X}(y|x) \colon y\in\Y)$ refers to the probability distribution of $Y$ given the event $X = x$.
%

\paragraph{Organization} Section~\ref{sec:sys} describes the system model of the proposed $\alpha$-GAN, emphasizing on the $\alpha$-parameterized value function and its connection to R\'{e}nyi cross entropy. Section~\ref{sec:solution} derives the solution of the min-max problem, where we reveal the R\'{e}nyi order $\alpha$ can be properly chosen to accelerate gradient. Section~\ref{sec:Exp} shows experiments results on 1D Gaussian and MNIST. Section~\ref{sec:relate} reviews related works and points out the underexplored region $\alpha \in (0,1)$. Section~\ref{sec:discuss} discusses the possible extensions of this paper, followed by the conclusion. 

\section{System Model}
\label{sec:sys}

%
Let $P_r$ be the true distribution of a population $X$. At the same time, we have a generator $G$ emits samples from distribution $P_g$. The objective is to have a generator $G$ emits a probability distribution $P_g$ equivalent to or approaching $P_r$. Assume that samples are drawn alternatively from the true population and generator. 
We introduce a discriminator $D$ as a helper to train the generator. 

For each sample $x$, let $Z$ be a random variable indicates two possible outcomes $\Z = \Set{\text{`r'}, \text{`g'}}$ as to the sample source: `r' denotes that sample $x$ is generated from true distribution; `g' denotes sample $x$ is emitted from the generator.
The discriminator makes a soft decision $\hat{P}_{Z|X=x}$ in terms of the probability distribution of $Z$. 
Denote $D(x) = \hat{P}_{Z|X}(\text{`r'} | x)$ and $1-D(x) = \hat{P}_{Z|X}(\text{`g'} | x)$ the outputs of the discriminator.  
The actual distribution $P_{Z|X}$ is defined as the fractional chances as 
\begin{equation*}
	\begin{aligned}
		P_{Z|X}(\text{`r'} | x) & = \frac{P_r(x)}{P_r(x) + P_g(x)}, \\
		P_{Z|X}(\text{`g'} | x) & = 1 - \frac{P_r(x)}{P_r(x) + P_g(x)}  = \frac{P_g(x)}{P_r(x) + P_g(x)}.
	\end{aligned}
\end{equation*}

\subsection{Problem Formulation}

There are two optimization steps involved in a GAN, one is over the discriminator $D$ and the other is over $P_g$ for training the generator. 
Let there be $N$ sample instances $x$. We define the \emph{value function} of $\alpha$-GAN by 
\begin{align} 
	V_\alpha (D, P_g)
	& = \frac{\alpha}{\alpha-1} \log \frac{1}{N}\sum_{x} \Big( P_{Z|X}(\text{`r'}|x) \hat{P}_{Z|X}^{\frac{\alpha-1}{\alpha}} (\text{`r'}|x)   +   P_{Z|X}(\text{`g'}|x) \hat{P}_{Z|X}^{\frac{\alpha-1}{\alpha}} (\text{`g'}|x) \Big) \label{eq:V} \\
	& =  \frac{\alpha}{\alpha-1} \log \frac{1}{N} \sum_{x} \Big( \frac{P_r(x)}{P_r(x) + P_g(x)} D(x)^{\frac{\alpha-1}{\alpha}}  +  \frac{P_g(x)}{P_r(x) + P_g(x)} (1-D(x))^{\frac{\alpha-1}{\alpha}} \Big) \nonumber
\end{align}
for all $\alpha \in [0,+\infty)$. 
The definition is closely related to the a R\'{e}nyi cross entropy proposed in \cite{Ding2024_ISIT} based on the original intuition to define R\'{e}nyi measures as a generalized $f$-mean~\cite{Renyi1961_Measures}.

\paragraph{Certainty measure interpretation by R\'{e}nyi cross entropy equivalence} 
Let $P_{XZ}(x,z) = P_{Z|X}(z|x) P_X(x) $ for all $x$ and $z$, where $P_{X}(x)$ is the empirical distribution from the sample set. We can rewrite~\eqref{eq:V} as 
\begin{align} \label{eq:Obj}
	V_\alpha (D, P_g)
	& = \frac{\alpha}{\alpha-1} \log \E_{XZ \sim P_{XZ}} \Big[  \hat{P}_{Z|X}^{\frac{\alpha-1}{\alpha}}(Z|X)  \Big]   \nonumber \\
	& = - H_\alpha (P_{Z|X}, \hat{P}_{Z|X})
\end{align}
where 
$	H_\alpha (P_{Z|X}, \hat{P}_{Z|X})
	= \frac{\alpha}{1-\alpha} \log \E_{XZ \sim P_{XZ}} \big[ ( \hat{P}_{Z|X}^{-1}(Z|X) )^{\frac{1-\alpha}{\alpha}}  \big]
$
is the R\'{e}nyi conditional cross entropy proposed in~\cite{Ding2024_ISIT} that measures the expected uncertainty incurred by the soft decision incurs $\hat{P}_{Z|X}$ in the form of generalized $f$-mean for $f (t)= \exp(\frac{1-\alpha}{\alpha}t) $. 
Alternatively, writing 
\begin{equation}\label{eq:powermean}
	V_\alpha (D, P_g) = \log ( \E_{XZ \sim P_{XZ}} [  \hat{P}_{Z|X}^{\frac{\alpha-1}{\alpha}} (Z|X) ]  )^{\frac{\alpha}{\alpha-1} } ,
\end{equation}
we have the value function being the logarithm of an $\frac{\alpha-1}{\alpha}$-order weighted power mean, quantifying the average gain incurred by the decision variable $\hat{P}_{Z|X}$. 
The exponential order $\frac{\alpha-1}{\alpha}$ (or $\frac{1-\alpha}{\alpha}$) is due to the R\'{e}nyi order shift in optimization, i.e., the optimizer coincides with the R\'{e}nyi entropy (see proof of Theorem~\ref{theo:Main}). 
In this sense, $V_\alpha(D, P_g)$ is measuring the certainty incurred by $\hat{P}_{Z|X}$ given the true conditional probability $P_{Z|X}$ as an $f$-mean for $f(t) = \exp(\frac{\alpha-1}{\alpha}t) $.

At order $\alpha= 1$, we have the limit of $V_\alpha (D, P_g)$ being (by L'H\^{o}pital's rule)
\begin{align}
	V_1 (D, P_g)  &=   - H_1 (P_{Z|X}, \hat{P}_{Z|X})  \label{eq:BCE} \\
	&= \frac{1}{N} \sum_{x} \Big( \frac{P_r(x)}{P_r(x) + P_g(x)} \log D(x) +  \frac{P_g(x)}{P_r(x) + P_g(x)} \log (1-D(x))  \Big), 
\end{align}
Here, $ H_1 (P_{Z|X}, \hat{P}_{Z|X})$ is exactly the Shannon order cross entropy that is widely used in machine learning. We will show in Section~\ref{sec:solution} that $\alpha$-GAN reduces to vanilla GAN \cite{Goodfellow2014_VanillaGAN}, while \eqref{eq:BCE} shows more straightforwardly connection to binary cross entropy, the loss function usually used in GAN training.
At order $\alpha= \infty$, we have 
\begin{align}
	V_\infty(D,P_g) & = - H_\infty (P_{Z|X}, \hat{P}_{Z|X})  \nonumber  \\
	& = \log \frac{1}{N} \sum_{x} \Big(  \frac{P_r(x)}{P_r(x) + P_g(x)} D(x) +  \frac{P_g(x)}{P_r(x) + P_g(x)} (1-D(x))  \Big)
\end{align}
being the logarithm of expected $0$-$1$ gain function (taking negative of it gives exactly the $0$-$1$ loss). 
Order $\alpha = 0$ will generate an value function $V_0 (D, P_g)  = \log \inf_{x,z}   \hat{P}_{Z|X}(z|x) =  \log \min_{x}  \min \Set{D(x), 1-D(x)} $. This order is closely related to nonstochasic decision making. We will not elaborate it in this paper, as GAN training is stochastic. Instead, we will focus on the range $\alpha \in (0,1)$. 

Assume countable and finite sample space.
The problem of training the generator $G$ to produce a probability distribution $P_g$ that approximates the real one $P_r$ is formulated by an alternating optimization problem as
\begin{equation} \label{eq:MainObj}
	\min_{P_g} \max_{D}    V_\alpha (D, P_g).
\end{equation}
for all $\alpha \in [0,\infty)$. 
The interpretation of this min-max problem, using the cross entropy expression~\eqref{eq:Obj}, is that the discriminator tries to seek the optimal soft decision $D^*$ that maximizes the certainty when discriminating the sample source. 
This will bring $D^*$ closer to the actual probability $P_{Z|X}(\text{`r'} | x)$. 
On the other hand, the generator tries to minimize the certainty, as it wants to insert fake samples to fool the discriminator. This can be done by making uniform distribution $P_{Z|X}(\text{`r'} | x) = P_{Z|X}(\text{`g'} | x)$. In this case, we must have $P_g = P_r$ and the final goal of GAN training attains.

\section{Solution}
\label{sec:solution}

The generator and discriminator have conflicting goals and therefore form a competitive (zero-sum) game. We show below that the saddle point solution of min-max problem~\eqref{eq:MainObj} is parameterized by R\'{e}nyi order $\alpha$. The proof reveals a connection to a R\'{e}nyi conditional entropy measure.

\begin{theorem} \label{theo:Main}
	For all $\alpha \in [0,\infty)$,
	\begin{enumerate}[(a)]
		\item The maximizer of $\max_{D} V_\alpha(D, P_g)$ is
		\begin{equation} \label{eq:DStar}
			D^*(x) = \frac{P_r^{\alpha}(x)}{P_r^{\alpha}(x) + P_g^{\alpha}(x)}, \qquad \forall x;
		\end{equation}	
		\item $\min_{P_g}   V_\alpha (D^*, P_g)  = -\log 2$ with the minimizer being $P_g^* (x) = P_r (x), \forall x $.
	\end{enumerate}
\end{theorem}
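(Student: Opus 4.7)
The plan is to tackle parts (a) and (b) in sequence. A structural observation that pervades both is that the prefactor $\alpha/(\alpha-1)$ flips sign at $\alpha=1$ and so does the convexity of the relevant inner expression in its optimization variable; these two sign flips cancel, so the same critical points extremize $V_\alpha$ in the claimed direction for both $\alpha > 1$ and $\alpha \in (0,1)$. For part (a), I fix $P_g$ and note that the argument of the outer logarithm in $V_\alpha$ decomposes as a sum of independent functions of $D(x)$, so the optimization reduces to pointwise maximization. Writing $\phi_x(D) = a_x D^{(\alpha-1)/\alpha} + b_x (1-D)^{(\alpha-1)/\alpha}$ with $a_x = P_r(x)/(P_r(x)+P_g(x))$, $b_x = 1-a_x$, and using $(\alpha-1)/\alpha - 1 = -1/\alpha$, the first-order condition $\phi_x'(D)=0$ reduces to $D/(1-D) = (a_x/b_x)^\alpha$, which rearranges to $D^*(x) = P_r^\alpha(x)/(P_r^\alpha(x)+P_g^\alpha(x))$ after $(P_r+P_g)^\alpha$ cancels. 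A second-derivative check of $\phi_x$ combined with $\mathrm{sgn}(\alpha/(\alpha-1))$ confirms this critical point is the maximizer in both regimes; the boundaries $\alpha=0$ (where $D^*=1/2$ directly) and $\alpha=1$ (by L'H\^{o}pital, recovering vanilla GAN) are handled by limits.

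For part (b), I substitute $D^*$ into $V_\alpha$ and simplify. Each summand collapses, via $P_r \cdot P_r^{\alpha-1} + P_g \cdot P_g^{\alpha-1} = P_r^\alpha + P_g^\alpha$ and after factoring out $(P_r^\alpha+P_g^\alpha)^{-(\alpha-1)/\alpha}$, to $(P_r^\alpha(x)+P_g^\alpha(x))^{1/\alpha}/(P_r(x)+P_g(x))$. Reading $\frac{1}{N}\sum_x$ as an empirical expectation under $P_X=(P_r+P_g)/2$ (the natural law for alternating draws), this yields $V_\alpha(D^*,P_g) = \frac{\alpha}{\alpha-1}\log \frac{1}{2}\sum_x (P_r^\alpha(x)+P_g^\alpha(x))^{1/\alpha}$, which is precisely the negative of the Arimoto R\'enyi conditional entropy $H_\alpha(Z|X)$ and realizes the paper's promised R\'enyi entropy connection. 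Setting $P_g=P_r$ collapses the sum to $2^{1/\alpha}$, and the prefactor algebra gives $V_\alpha(D^*,P_r) = -\log 2$.

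To establish global minimality of $P_g=P_r$ on the simplex, I would study $g(P_g) := \sum_x (P_r^\alpha(x)+P_g^\alpha(x))^{1/\alpha}$. Lagrange multipliers with the constraint $\sum_x P_g(x)=1$ yield the first-order condition $(P_r^\alpha+P_g^\alpha)^{1/\alpha-1}P_g^{\alpha-1}=\lambda$, which is satisfied by $P_g=P_r$ (both sides equal $2^{1/\alpha-1}$). A direct second-derivative calculation shows the per-$x$ summand has convexity sign $\mathrm{sgn}(\alpha-1)$ in $P_g(x)$, so $g$ is convex for $\alpha>1$ and concave for $\alpha\in(0,1)$; after passing through the sign-flipping prefactor $\alpha/(\alpha-1)$, both regimes collapse to the conclusion that $V_\alpha(D^*,\cdot)$ is globally minimized at $P_g=P_r$. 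The main obstacle I anticipate is precisely this paired-sign-flip bookkeeping together with upgrading the critical-point analysis to a \emph{global} statement on the simplex, for which the convexity/concavity of $g$ is the essential ingredient; a secondary subtlety is fixing the meaning of the empirical $\frac{1}{N}\sum_x$, but this is moot at $P_g=P_r$ since the integrand is then constant in $x$.
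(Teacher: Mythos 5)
Your proof is correct, but it follows a genuinely different route from the paper's. The paper proves (a) in one step by invoking \cite[Theorem~1]{Ding2024_ISIT}: since $V_\alpha = -H_\alpha(P_{Z|X},\hat P_{Z|X})$, maximizing over $D$ is minimizing the R\'enyi cross entropy over $\hat P_{Z|X}$, whose known minimizer is the $\alpha$-tilted distribution $P_{Z|X}^\alpha(\cdot|x)/\sum_z P_{Z|X}^\alpha(z|x)$ and whose minimum is the Arimoto conditional entropy $H_\alpha(P_{Z|X})$; part (b) is then dispatched by asserting that this entropy is maximized by the uniform conditional $P_{Z|X}=\tfrac12$, i.e.\ $P_g=P_r$. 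You instead give a self-contained calculus argument: pointwise first-order conditions on $\phi_x(D)=a_xD^{(\alpha-1)/\alpha}+b_x(1-D)^{(\alpha-1)/\alpha}$ for (a), explicit substitution collapsing each summand to $(P_r^\alpha+P_g^\alpha)^{1/\alpha}/(P_r+P_g)$, and a Lagrange-plus-convexity argument on $g(P_g)=\sum_x(P_r^\alpha+P_g^\alpha)^{1/\alpha}$ for (b); in both parts the sign of the second derivative ($\operatorname{sgn}(\alpha-1)$, resp.\ $\operatorname{sgn}(1-\alpha)$) cancels against the sign of the prefactor $\alpha/(\alpha-1)$, exactly as you describe. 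What the paper's route buys is brevity and an explicit placement of the result inside the R\'enyi cross-entropy framework; what your route buys is self-containedness and, importantly, an actual \emph{global}-optimality argument for (b) on the simplex, which the paper leaves implicit (it merely asserts the max-entropy property of the uniform distribution, and indeed flags the concavity question for $\alpha<1$ as unresolved in its conclusion --- your observation that $g$ is concave for $\alpha\in(0,1)$ and convex for $\alpha>1$, with the prefactor flipping min to max accordingly, closes that gap). Two minor caveats, neither fatal: your reading of $\tfrac1N\sum_x$ as expectation under $(P_r+P_g)/2$ is the one consistent with the Arimoto entropy the paper intends, but should be stated as a convention since the paper's notation is ambiguous and the function of $P_g$ being optimized depends on it; and the stationarity argument for $P_g^*=P_r$ implicitly assumes $P_r$ has full support (points with $P_r(x)=0$ put $P_g^*(x)=0$ on the boundary and need a KKT sign check), an edge case the paper also ignores.
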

\begin{proof}
	(a) can be proved by using the result in \cite{Ding2024_ISIT}. As $ \min_{P_g} \max_{D}    V_\alpha (D, P_g) = -  \max_{P_g} \min_{D}  H_\alpha (P_{Z|X}, \hat{P}_{Z|X}) $. For each $P_g$, i.e., fixed $P_{Z|X}$, $\min_{D}  H_\alpha (P_{Z|X}, \hat{P}_{Z|X})  = H_\alpha(P_{Z|X})$ \cite[Theorem~1]{Ding2024_ISIT} with the minimizer	
	$
		\hat{P}^*_{Z|X}(\text{`r'} |x) = \frac{P_{Z|X}^{\alpha}(\text{`r'}|x)}{P_{Z|X}^{\alpha}(\text{`r'}|x) + P_{Z|X}^{\alpha}(\text{`g'}|x)}.
	$	
	This gives~\eqref{eq:DStar}. 
	Here, $H_{\alpha} (P_{X|Y}) = \frac{\alpha}{1-\alpha} \log \frac{1}{N}\sum_{x} [ ( P_{Z|X}^{\alpha}(\text{`r'}|x) + P_{Z|X}^{\alpha}(\text{`g'}|x) )^{\frac{1}{\alpha}} ] $ is the Arimoto conditional entropy~\cite{Arimoto1977}, 
	The next problem is
	\begin{equation}
		\min_{P_g}  V_\alpha (D^*,P_g) = - \max_{P_g} H_{\alpha} (P_{Z|X}),
	\end{equation}
	where the maximum entropy is incurred by uniform distribution $P_{Z|X}^* = \frac{1}{2}$ which gives the minimizer $P_g^* = P_r$. 	
\end{proof}

\begin{remark}[$1$-GAN $\Leftrightarrow$ vanilla GAN] 
For $\alpha=1$, the solution \eqref{eq:DStar} is $D^*(x) = \frac{P_r(x)}{P_r(x) + P_g(x)},\forall x$. This coincides with vanilla GAN~\cite{Goodfellow2014_VanillaGAN}, where the value function is 
\begin{equation}
	V(D,P_g) = \frac{1}{N} \sum_{x} \big( P_r(x) \log D(x) +  P_g(x) \log (1-D(x)) \big). 
\end{equation}
This differs $V_1(D,P_g)$ in a denominator $P_r(x) + P_g(x)$ for each sample $x$. As the optimization is separable in sample space, the denominator can be treated as a nonnegative scaler and therefore the optimizer remains the same. However, the definition $V_1(D,P_g)$ shows explicitly a relationship to the binary cross entropy. 
\end{remark}

\subsection{Optimal discriminator $D^*$}

In the proof of Theorem~\ref{theo:Main}, the best soft decision $\hat{P}^*_{Z|X}(\text{`r'} |x)$ is an $\alpha$-scaled probability of $P_{Z|X}(\text{`r'} |x)$ that assigns more probability mass to high chance outcome of $Z$. The resulting optimal discriminator  $D^*(x) = \frac{P_r^{\alpha}(x)}{P_r^{\alpha}(x)+P_g^{\alpha}(x)} = \frac{1}{1+(P_g(x)/P_r(x))^{\alpha}}$ is monotonic in $\alpha$ in two cases. For $P_{g}  < P_{r}$,   $D^*(x)$ is increasing in $\alpha$; For $P_{g}  > P_{r}$,   $D^*(x)$ is decreasing in $\alpha$. See Figure~\ref{fig:Analysis}(a). 
It is clear that as $\alpha$ increase to very large, the optimal soft decision becomes maximum likelihood (a hard decision). However, $D^*(x) $ remains $0.5$ always, if $\alpha=0$ or $P_r(x) = P_g(x)$.

\begin{figure}[t]
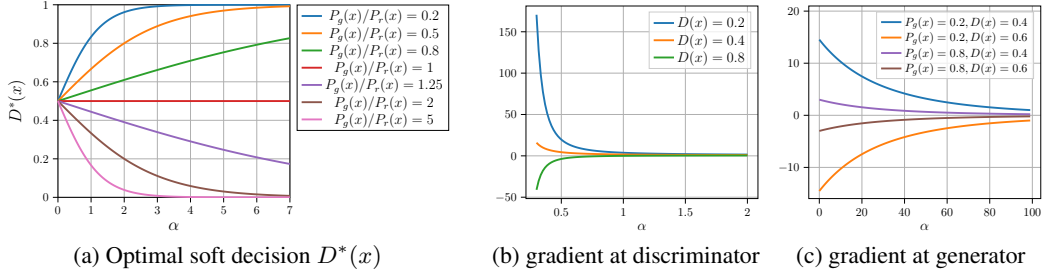

	\subfloat[Optimal soft decision $D^*(x)$]{
		\scalebox{0.45}{\input{figures/DStar_vs_Alpha.tex}}
	}\quad 
	\subfloat[gradient at discriminator]{
		\scalebox{0.45}{\input{figures/GradientD.tex}}
	}
	\subfloat[gradient at generator]{
		\scalebox{0.45}{\input{figures/GradientG.tex}}
	}
	\caption{(a) the optimal discriminator $D^*(x)$ in~\eqref{eq:DStar}; (b) the gradient when training the discriminator: RHS in~\eqref{eq:GradD};  the gradient when training the generator: RHS in~\eqref{eq:GradG}
	}
	\label{fig:Analysis}
\end{figure}

\subsection{Role of $\alpha$: accelerated gradient}
\label{sec:Alpha}

Let $\theta_D$ and $\theta_{P_g}$ be the parameters of discriminator and generator, respectively. The $\alpha$-GAN can be trained by alternating the maximization and minimization of $V_\alpha(D,P_g)$ over $D$ and $P_g$, respectively. This is accomplished by gradient ascent $\theta_{D} \coloneqq \theta_{D} + \nabla_{\theta_{D}} V_\alpha(D,P_g)$ followed by gradient descent  $\theta_{P_g} \coloneqq \theta_{P_g} - \nabla_{\theta_{P_g}} V_\alpha(D,P_g)$ iteratively. Both gradients are parameterized by the R\'{e}nyi order $\alpha$. 
The training algorithm can follow \cite[Algorithm~1]{Goodfellow2014_VanillaGAN}, except that the binary cross entropy should be replaced by $V_\alpha(D,P_g)$. 
We focus on how the two gradients vary with the R\'{e}nyi order $\alpha$.

For the discriminator, the partial derivative of $V_\alpha(D,P_g)$ wrt the soft decision $D(x)$ for each $x$ is 
\begin{equation}\label{eq:GradD}
	\frac{\partial V_\alpha(D,P_g)}{\partial D(x)} \propto \frac{P_r(x)}{P_r(x)+P_g(x)} D^{-\frac{1}{\alpha}}(x)  - \frac{P_g(x)}{P_r(x)+P_g(x)} (1-D(x))^{-\frac{1}{\alpha}}.
\end{equation}
The absolute value of RHS is exponentially decreasing in $\alpha$. See Figure~\ref{fig:Analysis}(b). The magnitude of this gradient is greatly enlarged as $\alpha \to 0$.
For the generator, the gradient wrt $P_g$ is 
\begin{equation}\label{eq:GradG}
	\frac{\partial V_\alpha(D,P_g)}{\partial P_g(x)} \propto \frac{\alpha}{\alpha-1} \frac{P_r(x)}{(P_r(x)+P_g(x))^2} \big( (1-D(x))^{\frac{\alpha-1}{\alpha}} - D^{\frac{\alpha-1}{\alpha}}(x) \big),
\end{equation}
which is also increasing as $\alpha$ approaches $0$. See Figure~\ref{fig:Analysis}(c).

The accelerated gradient can fasten learning speed. This is the main advantage of introducing R\'{e}nyi order $\alpha$ as a hyperparameter into GAN. Therefore, setting $\alpha \in (0,1)$ is anticipated to enhance the convergence rate, which will be experimentally verified in Section~\ref{sec:Exp}. 
This is in fact a method for solving the vanishing gradient problem~\cite{VanishGrad1998}, one of the common challenges in generative model training where the gradient diminishes when propagating backward in neural network. 
Usually, the problem is alleviated by introducing extra mechanisms, e.g., implementing an instantaneous gradient decay rate detector~\cite{Han2021}, a combination of smooth and nonsmooth functions (usually involves regularization)~\cite{Hu2009accelerated}, restarting the algorithm for a fixed number of iterations~\cite{Donoghue2013}. 
However, the proposed $\alpha$-GAN has accelerated gradient design nested in its loss function already, where the acceleration can be varied by changing the hyperparameter $\alpha$.

\begin{figure}[t]
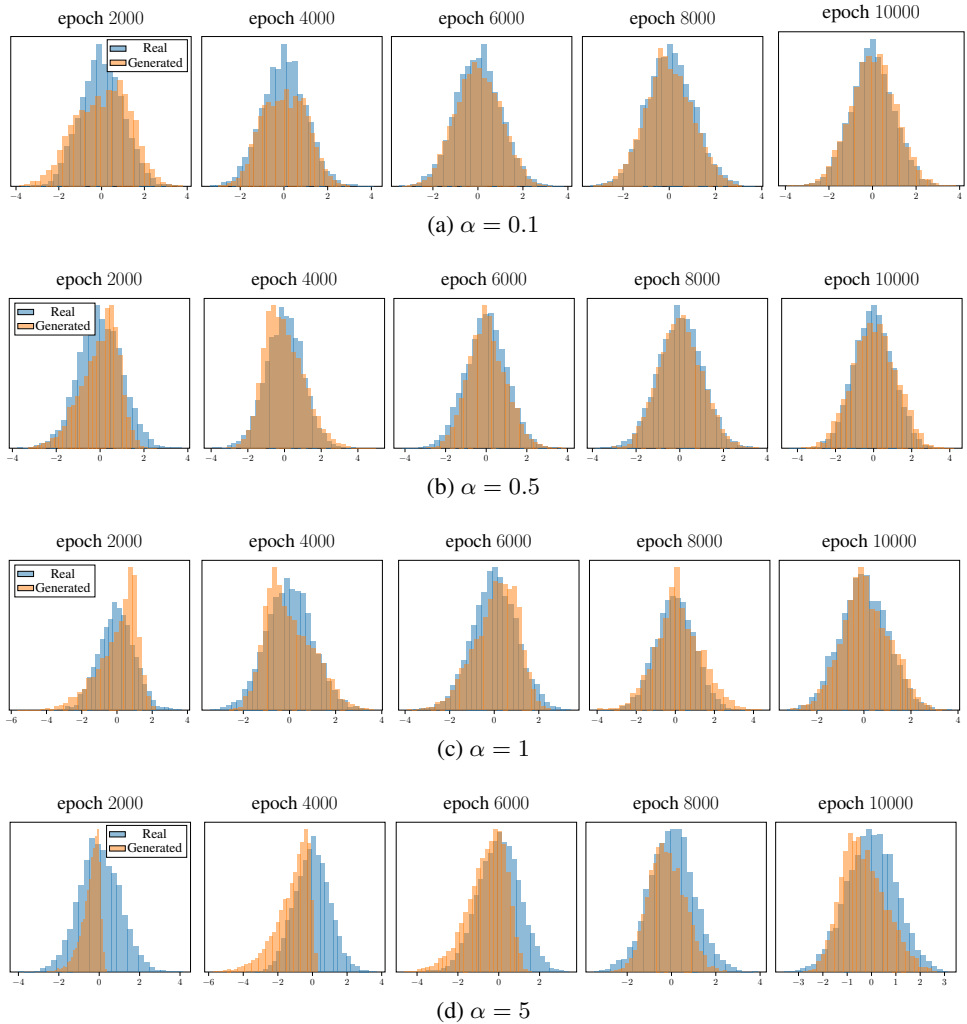

	\centering
	\subfloat[$\alpha = 0.1$]{
		\scalebox{0.35}{\input{figures/1DGaussHist_0.1_2000.tex}}
		\scalebox{0.35}{\input{figures/1DGaussHist_0.1_4000.tex}} 
		\scalebox{0.35}{\input{figures/1DGaussHist_0.1_6000.tex}}
		\scalebox{0.35}{\input{figures/1DGaussHist_0.1_8000.tex}} 
		\scalebox{0.36}{\input{figures/1DGaussHist_0.1_10000.tex}}}\\
	\subfloat[$\alpha = 0.5$]{
		\scalebox{0.35}{\input{figures/1DGaussHist_0.5_2000.tex}}
		\scalebox{0.35}{\input{figures/1DGaussHist_0.5_4000.tex}} 
		\scalebox{0.35}{\input{figures/1DGaussHist_0.5_6000.tex}}
		\scalebox{0.35}{\input{figures/1DGaussHist_0.5_8000.tex}} 
		\scalebox{0.35}{\input{figures/1DGaussHist_0.5_10000.tex}}	
	}\\
	\subfloat[$\alpha = 1$]{
		\scalebox{0.35}{\input{figures/1DGaussHist_1_2000.tex}}
		\scalebox{0.35}{\input{figures/1DGaussHist_1_4000.tex}} 
		\scalebox{0.35}{\input{figures/1DGaussHist_1_6000.tex}}
		\scalebox{0.35}{\input{figures/1DGaussHist_1_8000.tex}} 
		\scalebox{0.35}{\input{figures/1DGaussHist_1_10000.tex}}
	}\\
	\subfloat[$\alpha = 5$]{
		\scalebox{0.35}{\input{figures/1DGaussHist_5_2000.tex}}
		\scalebox{0.35}{\input{figures/1DGaussHist_5_4000.tex}} 
		\scalebox{0.35}{\input{figures/1DGaussHist_5_6000.tex}}
		\scalebox{0.35}{\input{figures/1DGaussHist_5_8000.tex}} 
		\scalebox{0.35}{\input{figures/1DGaussHist_5_10000.tex}}
	}
	\caption{Training $\alpha$-GAN for learning $1$D Gaussian for different values of $\alpha$, where $\alpha=1$ in (c) corresponds to the vanilla GAN~\cite{Goodfellow2014_VanillaGAN}. The figure shows real and generated probability distributions for every $2000$ epochs.}
	\label{fig:1DGaussian}
\end{figure}

\section{Experiment}
\label{sec:Exp}
We first run experiments for training the proposed $\alpha$-GAN for 1D Gaussian and then the MNIST dataset. The main purpose is to observe the convergence performance by varying the value of R\'{e}nyi order $\alpha$. 

\paragraph{1D-Gaussian}
We implement GAN as follows. Latent dimension is set to $5$. Learning rate is $0.0002$ and batch size is $128$. The discriminator and generator both have two layers, fully connected, using ReLu activation function. The discriminator applies sigmoid activation function to output a probability. 
We vary R\'{e}nyi order $\alpha \in \Set{0.1, 0.5, 1, 5}$ and run $10^4$ epochs for each value. The results are shown in~Figure~\ref{fig:1DGaussian}, where at $\alpha=0.1$ and $\alpha= 0.5$, the $\alpha$-GAN converges faster. 
This is consistent with the observation in Section~\ref{sec:Alpha} that the gradient magnitude is accelerated in range $\alpha \in (0,1)$, which results in a faster learning speed. 

Figure~\ref{fig:1DGaussianAna} displays the final value of the discriminator output $D(x)$ for all values of $\alpha$. As explained in Section~\ref{sec:sys}, if the $\alpha$-GAN model is trained well, it should be able to generate samples having probability distribution almost equal to the true population. In this case, the discriminator is not able to discriminate sample source, and we should have $D(x) = 0.5$ for all samples $x$. It can be seem that at $\alpha = 0.1$, $D(x)$ vs. $x$ is flattened, which remains $0.5$ for almost all $x$. 
We also plot the discriminator loss in Figure~\ref{fig:1DGaussianAna}. The loss values are in different scales when $\alpha$ changes. However, it is clear that $\alpha = 0.1$ generates a stable convergence curve. 
More results can be found in the supplementary materials.

\begin{figure}[t]
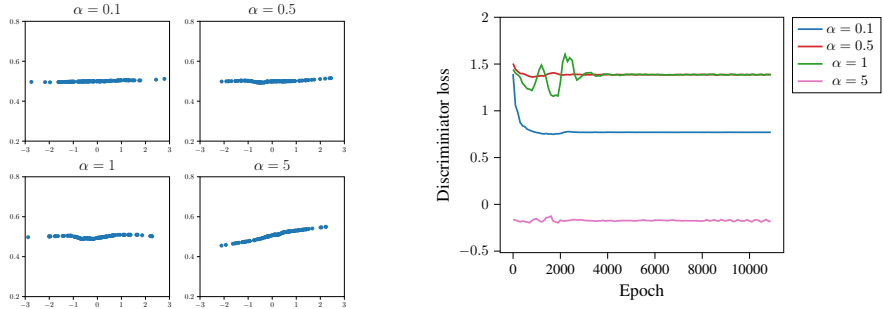

	\begin{minipage}{0.45\textwidth}
		\centering
		\scalebox{0.28}{\input{figures/1DGaussD_0.1_10000.tex}}
		\scalebox{0.28}{\input{figures/1DGaussD_0.5_10000.tex}}\\
		\scalebox{0.28}{\input{figures/1DGaussD_1_10000.tex}}
		\scalebox{0.28}{\input{figures/1DGaussD_5_10000.tex}}
	\end{minipage}
	\begin{minipage}{0.45\textwidth}
		\centering
		\scalebox{0.55}{\input{figures/1DGaussLoss_0.1.tex}}
	\end{minipage}
	\caption{Left: the final discriminator's soft decision $D(x)$ after $10^4$ epochs for different $\alpha$; Right: discriminator's loss vs epoch index.}
	\label{fig:1DGaussianAna}
\end{figure}

\paragraph{MNIST}

We set batch size to $64$ and latent dimension to $100$. 
Adam optimizer is used with learning rate $0.0002$, $\beta_1 = 0.5$ and $\beta_2=0.999$. 
Images are $ 28\times28$ gray-scale. There are one flatten layer and two fully connected layers with ReLu activation. 
We try R\'{e}nyi order values $\alpha \in \Set{0.05,1,2,3}$ and run $100$ epochs. Figure~\ref{fig:MNIST} shows the generated images for all $\alpha$. The same as previous experiments on 1D Gaussian, $\alpha=0.05$ produces the best performance. The results is deteriorating as $\alpha$ increases. We find that there is a model collapse when $\alpha > 3$. This could mean that the GAN model is more robust for smaller $\alpha$.

\begin{figure}[h]
	\subfloat[$\alpha=0.05$]{
		\begin{tabular}[b]{c}
			\includegraphics[width=0.055\textwidth]{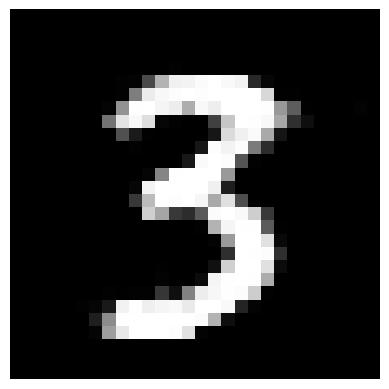}
			\includegraphics[width=0.055\textwidth]{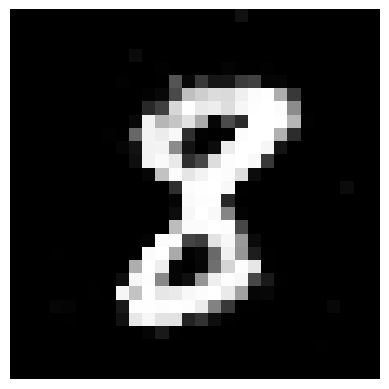}
			\includegraphics[width=0.055\textwidth]{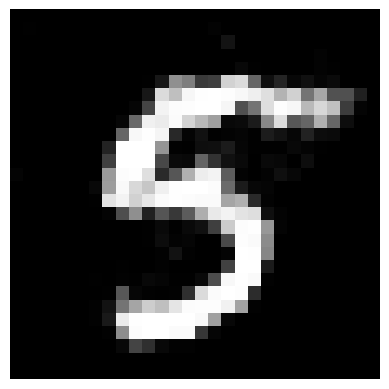}
			\includegraphics[width=0.055\textwidth]{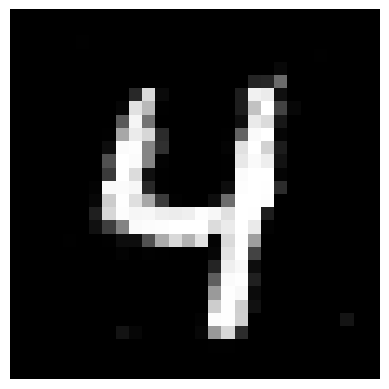}
			\includegraphics[width=0.055\textwidth]{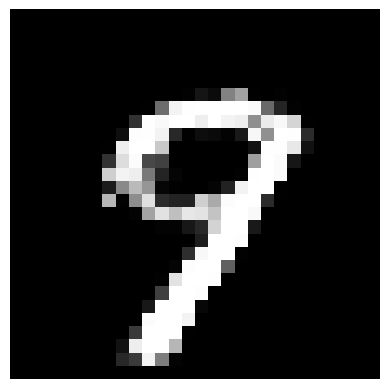}
			\includegraphics[width=0.055\textwidth]{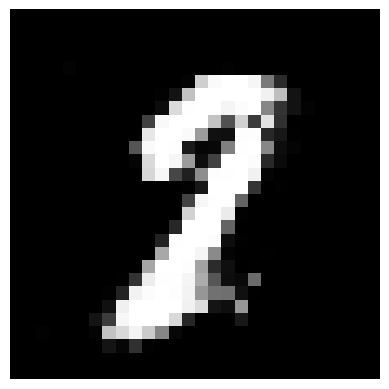}
			\includegraphics[width=0.055\textwidth]{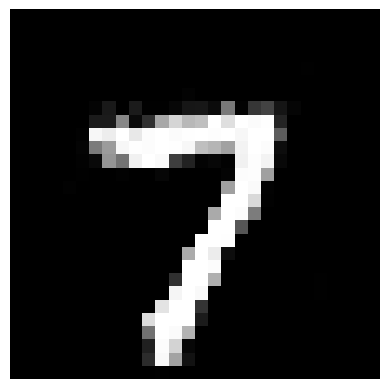}
			\includegraphics[width=0.055\textwidth]{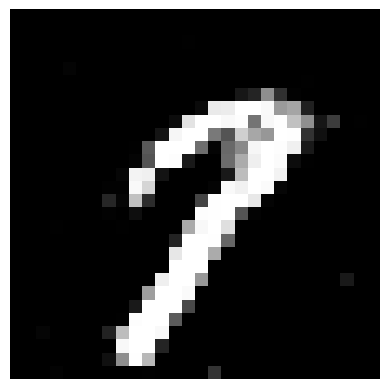}
			\includegraphics[width=0.055\textwidth]{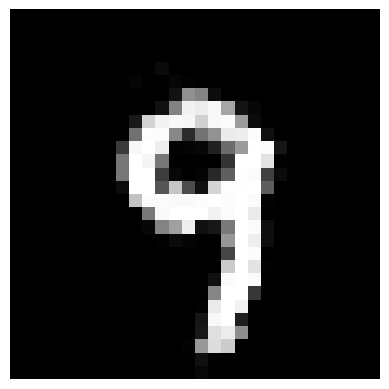}
			\includegraphics[width=0.055\textwidth]{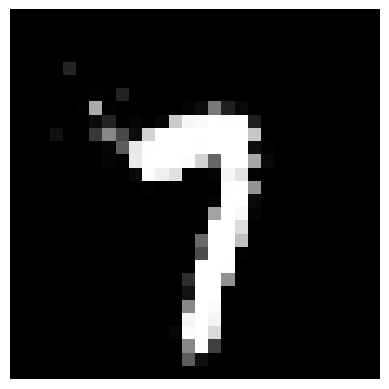}
			\includegraphics[width=0.055\textwidth]{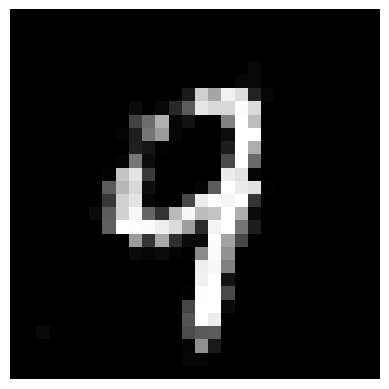}
			\includegraphics[width=0.055\textwidth]{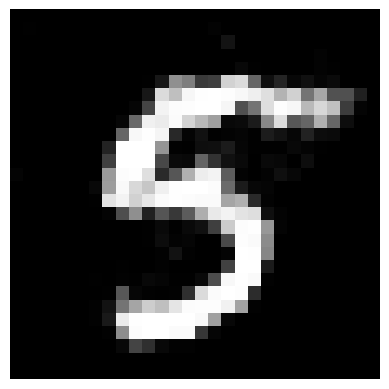}
			\includegraphics[width=0.055\textwidth]{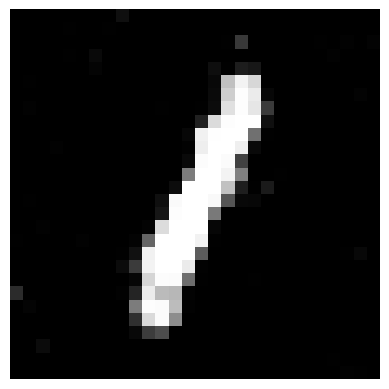}
			\includegraphics[width=0.055\textwidth]{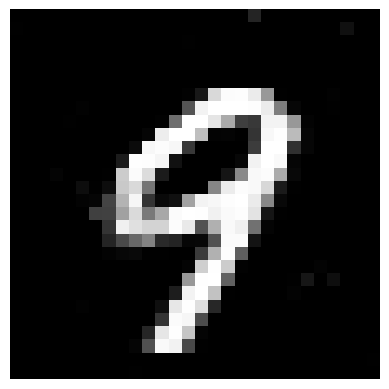}
			\includegraphics[width=0.055\textwidth]{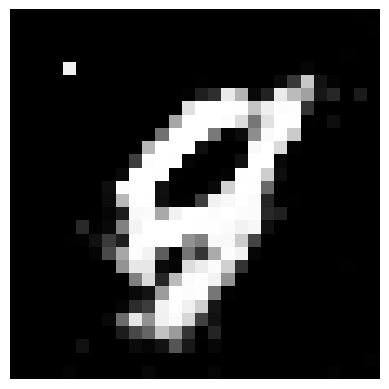}
			\includegraphics[width=0.055\textwidth]{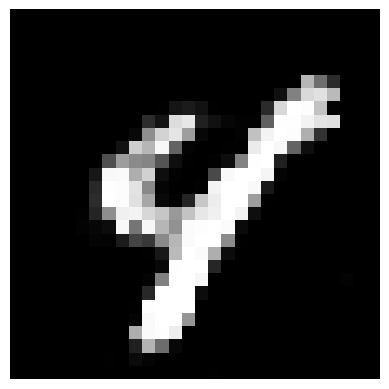}\\
			\includegraphics[width=0.055\textwidth]{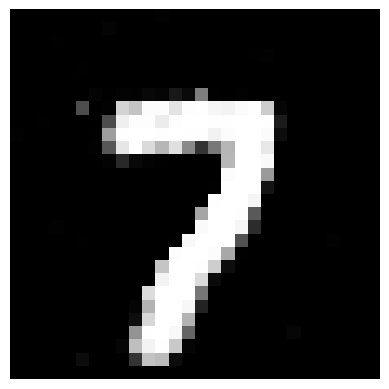}
			\includegraphics[width=0.055\textwidth]{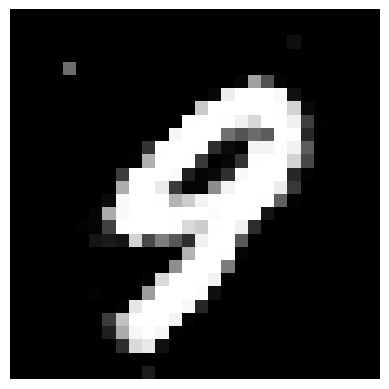}
			\includegraphics[width=0.055\textwidth]{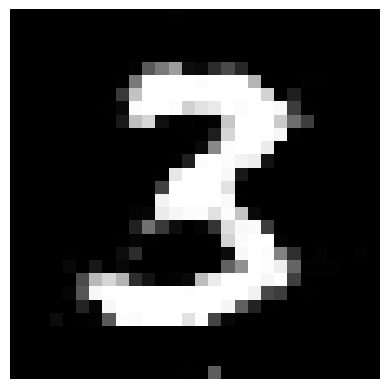}
			\includegraphics[width=0.055\textwidth]{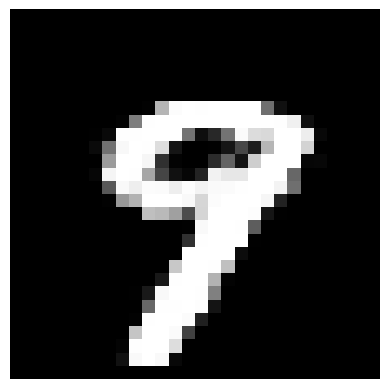}
			\includegraphics[width=0.055\textwidth]{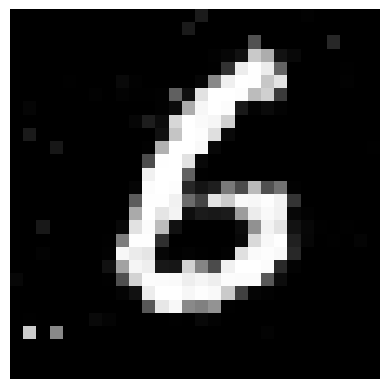}
			\includegraphics[width=0.055\textwidth]{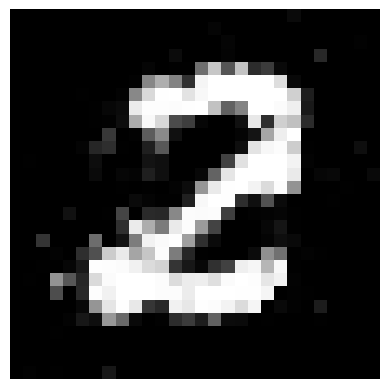}
			\includegraphics[width=0.055\textwidth]{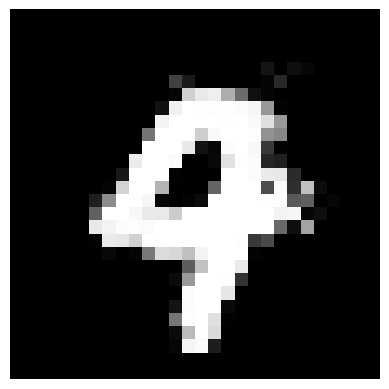}
			\includegraphics[width=0.055\textwidth]{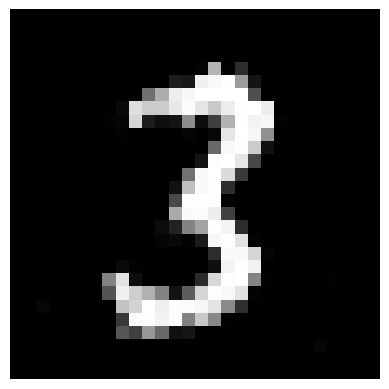}
			\includegraphics[width=0.055\textwidth]{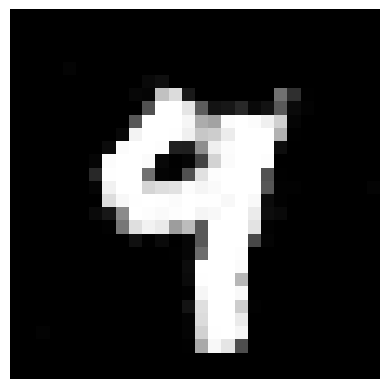}
			\includegraphics[width=0.055\textwidth]{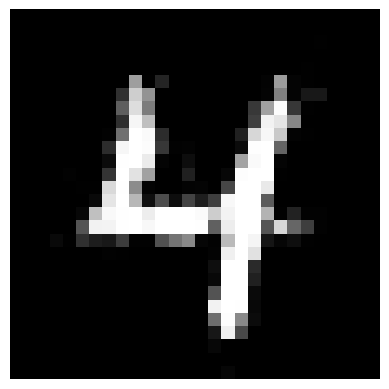}
			\includegraphics[width=0.055\textwidth]{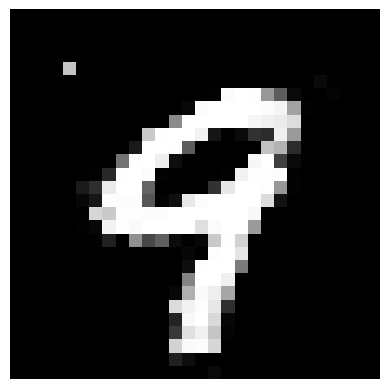}
			\includegraphics[width=0.055\textwidth]{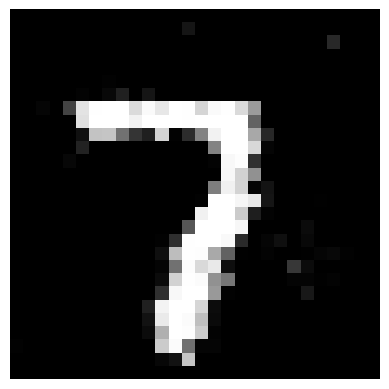}
			\includegraphics[width=0.055\textwidth]{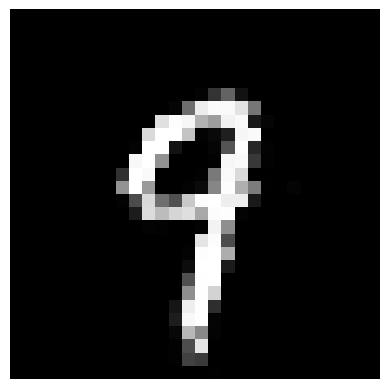}
			\includegraphics[width=0.055\textwidth]{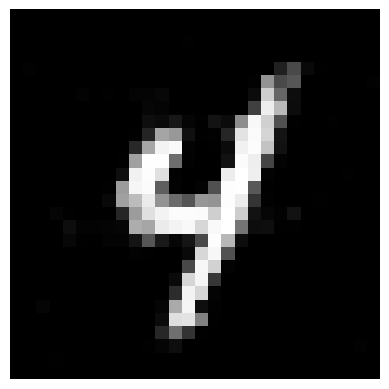}
			\includegraphics[width=0.055\textwidth]{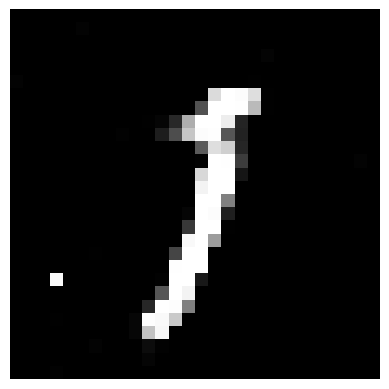}
			\includegraphics[width=0.055\textwidth]{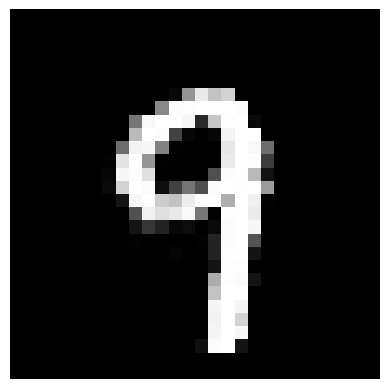}
		\end{tabular}
	}\\
		\subfloat[$\alpha=1$]{
		\begin{tabular}[b]{c}
			\includegraphics[width=0.055\textwidth]{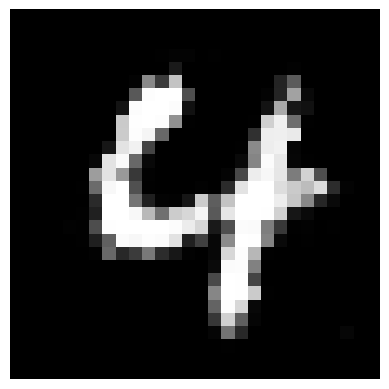}
			\includegraphics[width=0.055\textwidth]{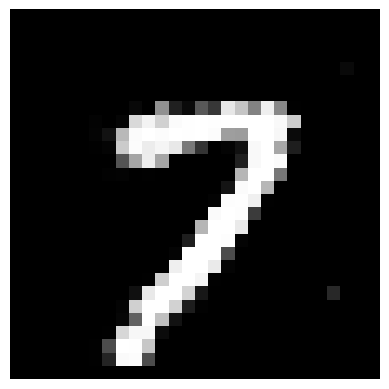}
			\includegraphics[width=0.055\textwidth]{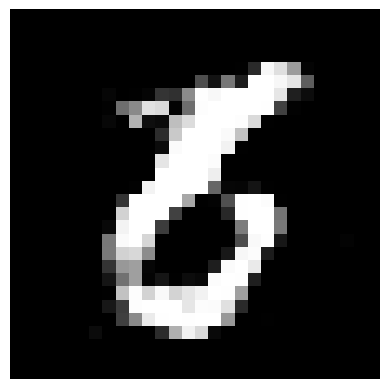}
			\includegraphics[width=0.055\textwidth]{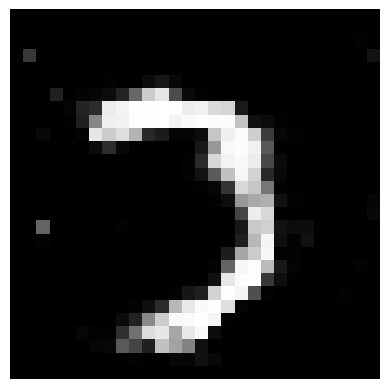}
			\includegraphics[width=0.055\textwidth]{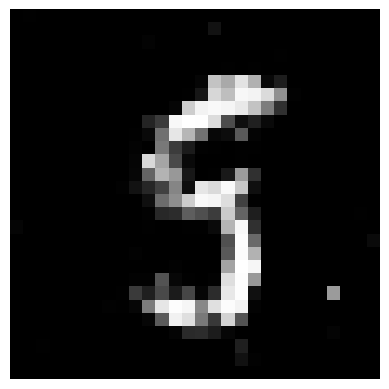}
			\includegraphics[width=0.055\textwidth]{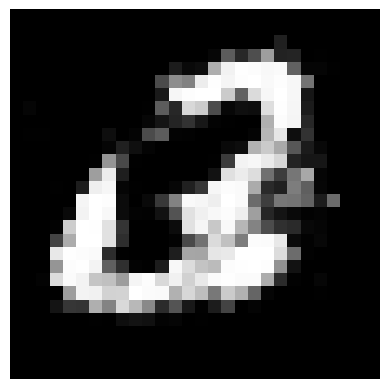}
			\includegraphics[width=0.055\textwidth]{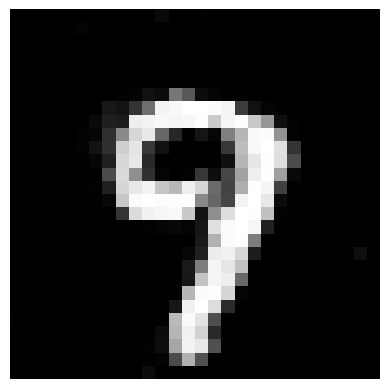}
			\includegraphics[width=0.055\textwidth]{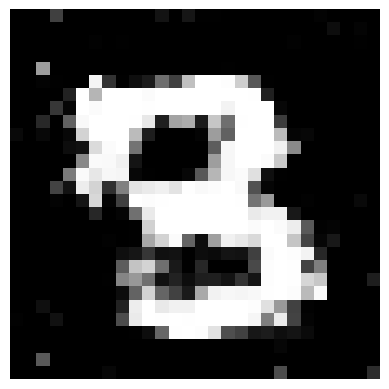}
			\includegraphics[width=0.055\textwidth]{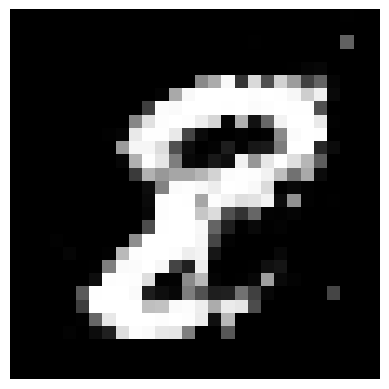}
			\includegraphics[width=0.055\textwidth]{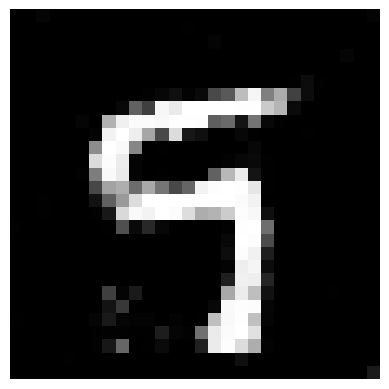}
			\includegraphics[width=0.055\textwidth]{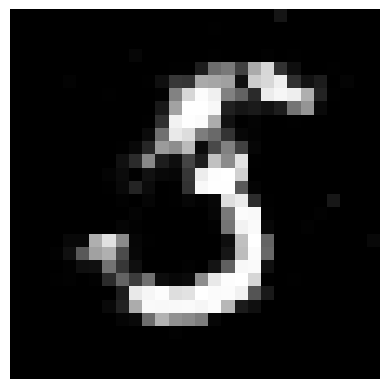}
			\includegraphics[width=0.055\textwidth]{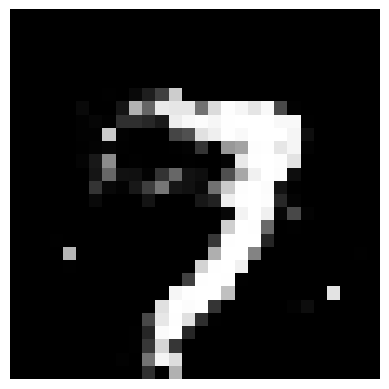}
			\includegraphics[width=0.055\textwidth]{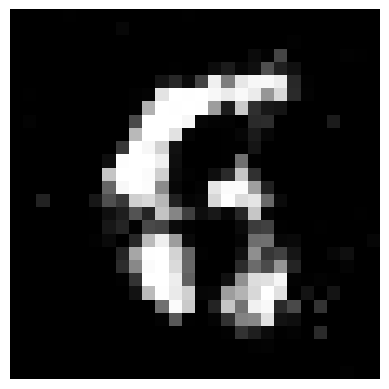}
			\includegraphics[width=0.055\textwidth]{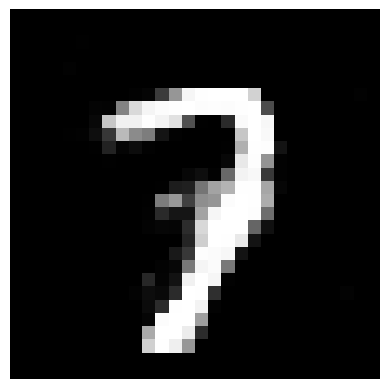}
			\includegraphics[width=0.055\textwidth]{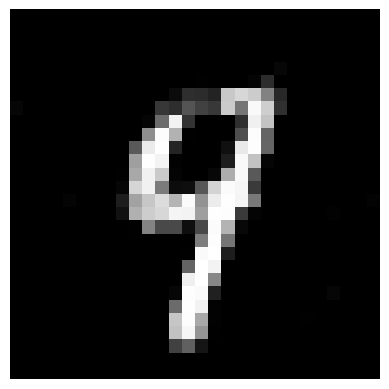}
			\includegraphics[width=0.055\textwidth]{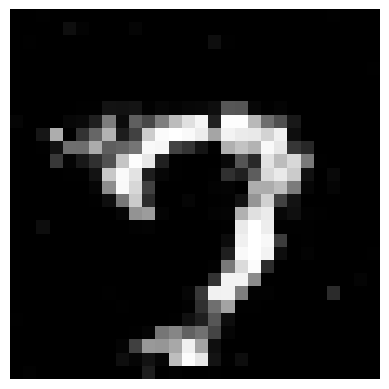}\\
			\includegraphics[width=0.055\textwidth]{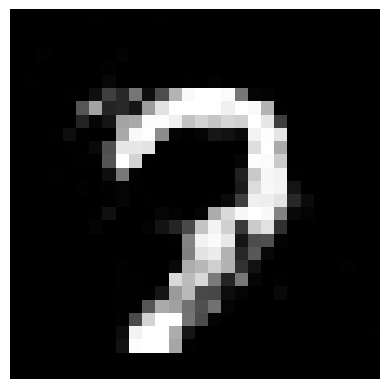}
			\includegraphics[width=0.055\textwidth]{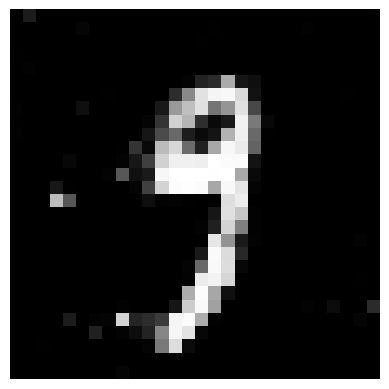}
			\includegraphics[width=0.055\textwidth]{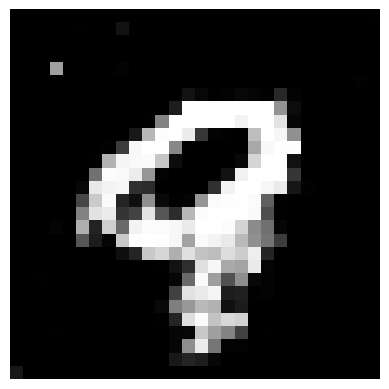}
			\includegraphics[width=0.055\textwidth]{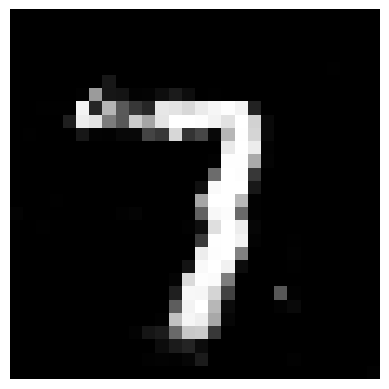}
			\includegraphics[width=0.055\textwidth]{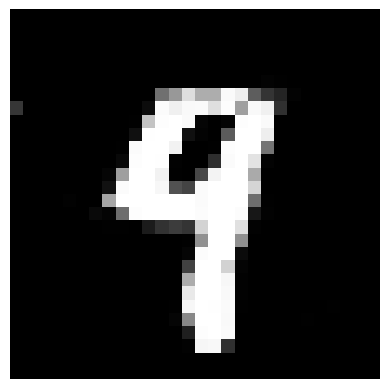}
			\includegraphics[width=0.055\textwidth]{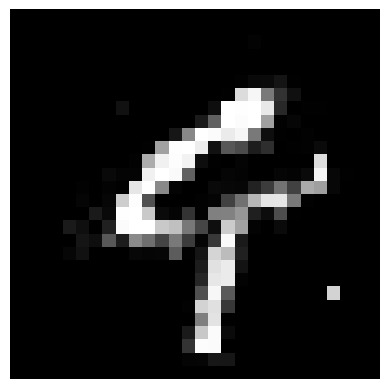}
			\includegraphics[width=0.055\textwidth]{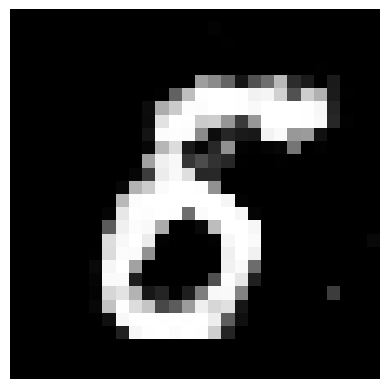}
			\includegraphics[width=0.055\textwidth]{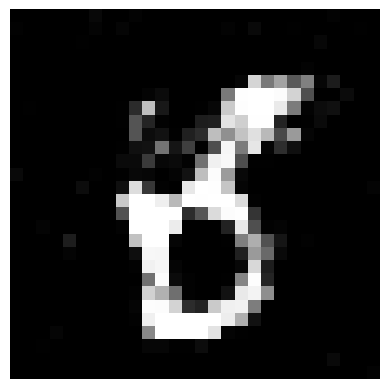}
			\includegraphics[width=0.055\textwidth]{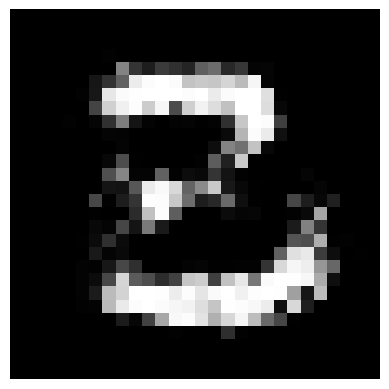}
			\includegraphics[width=0.055\textwidth]{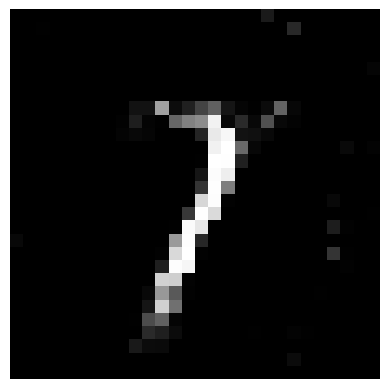}
			\includegraphics[width=0.055\textwidth]{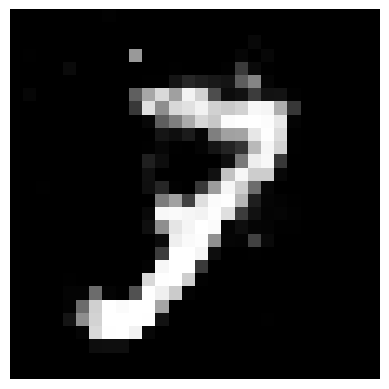}
			\includegraphics[width=0.055\textwidth]{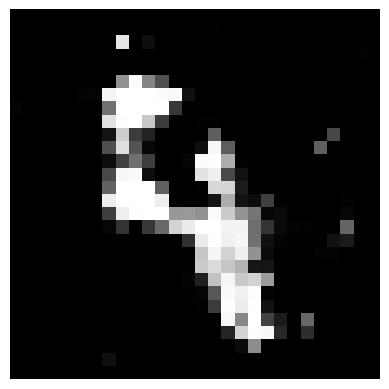}
			\includegraphics[width=0.055\textwidth]{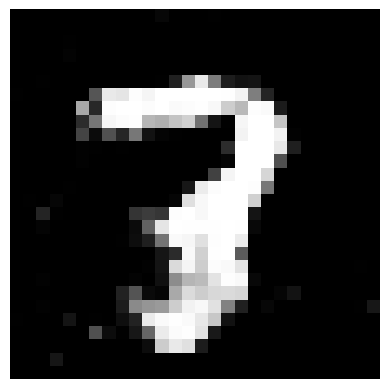}
			\includegraphics[width=0.055\textwidth]{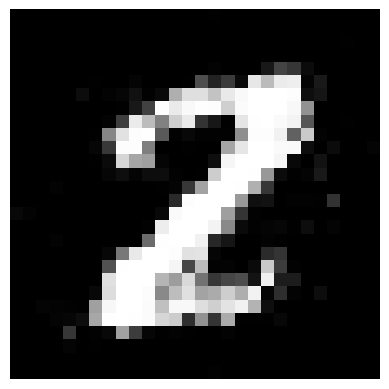}
			\includegraphics[width=0.055\textwidth]{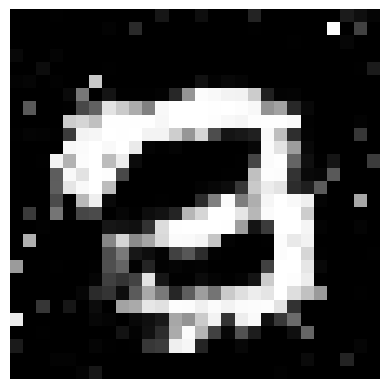}
			\includegraphics[width=0.055\textwidth]{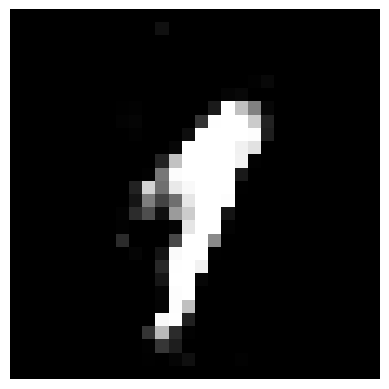}
		\end{tabular}
	}\\
		\subfloat[$\alpha=2$]{
		\begin{tabular}[b]{c}
			\includegraphics[width=0.055\textwidth]{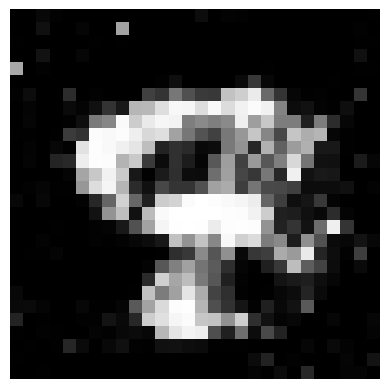}
			\includegraphics[width=0.055\textwidth]{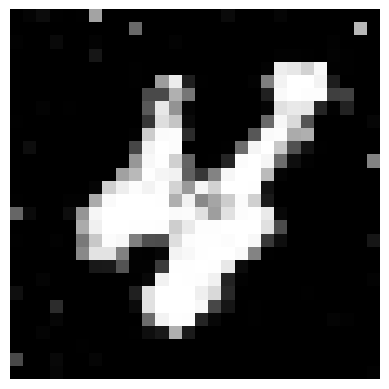}
			\includegraphics[width=0.055\textwidth]{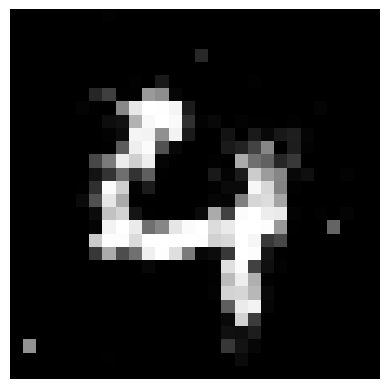}
			\includegraphics[width=0.055\textwidth]{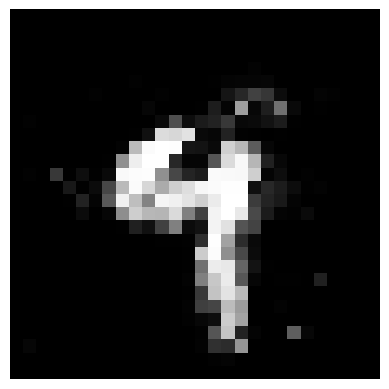}
			\includegraphics[width=0.055\textwidth]{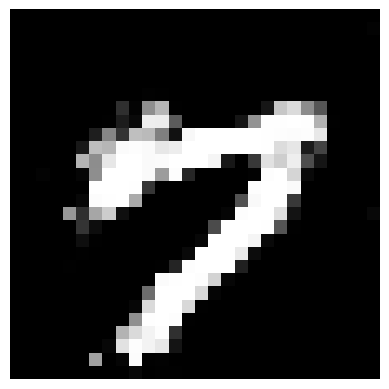}
			\includegraphics[width=0.055\textwidth]{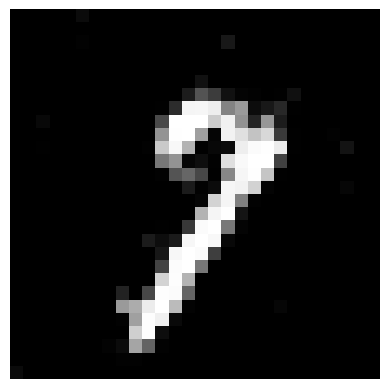}
			\includegraphics[width=0.055\textwidth]{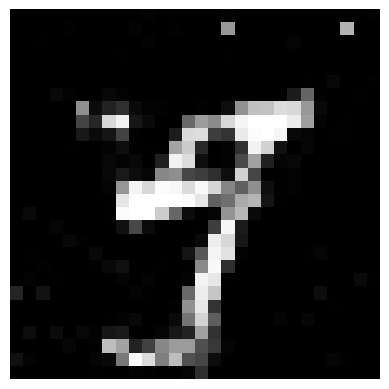}
			\includegraphics[width=0.055\textwidth]{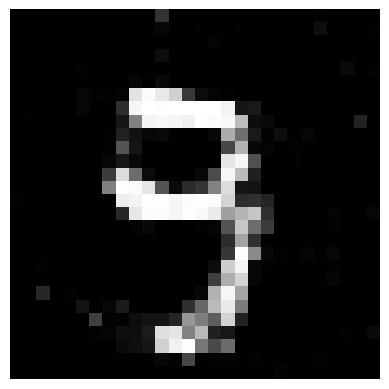}
			\includegraphics[width=0.055\textwidth]{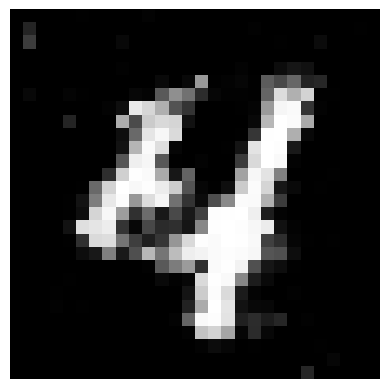}
			\includegraphics[width=0.055\textwidth]{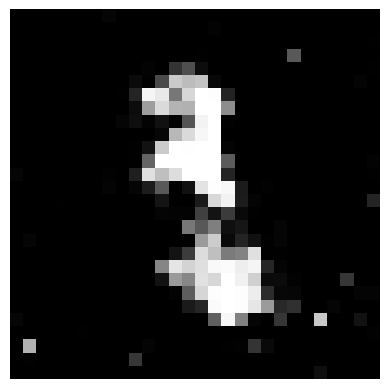}
			\includegraphics[width=0.055\textwidth]{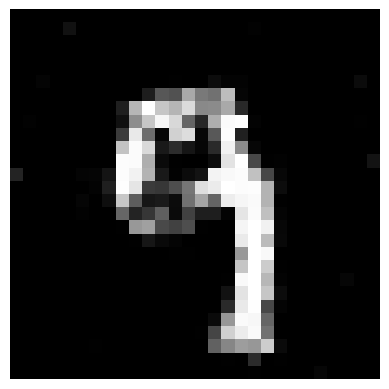}
			\includegraphics[width=0.055\textwidth]{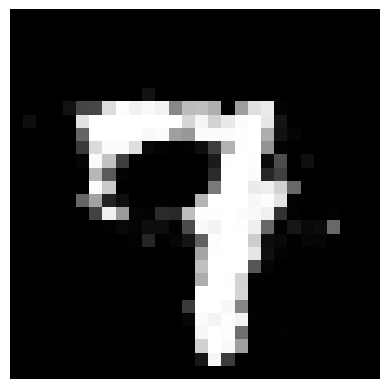}
			\includegraphics[width=0.055\textwidth]{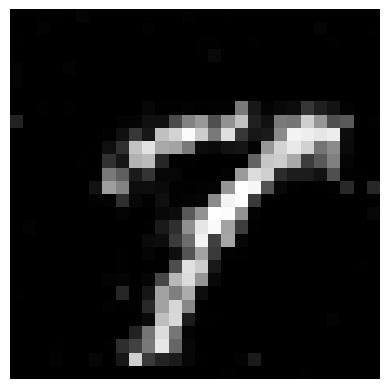}
			\includegraphics[width=0.055\textwidth]{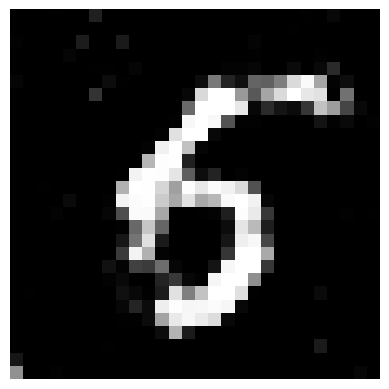}
			\includegraphics[width=0.055\textwidth]{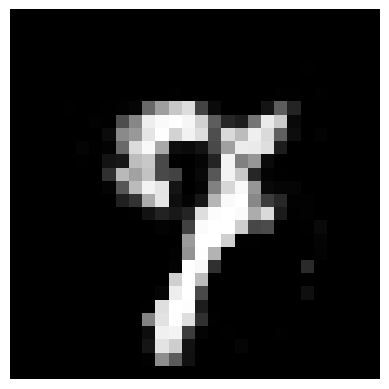}
			\includegraphics[width=0.055\textwidth]{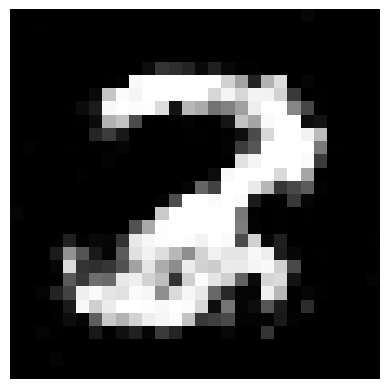}\\
			\includegraphics[width=0.055\textwidth]{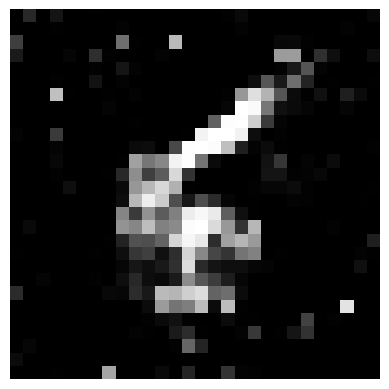}
			\includegraphics[width=0.055\textwidth]{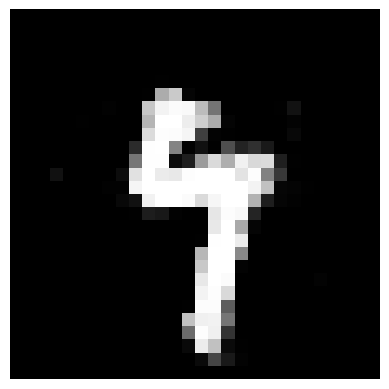}
			\includegraphics[width=0.055\textwidth]{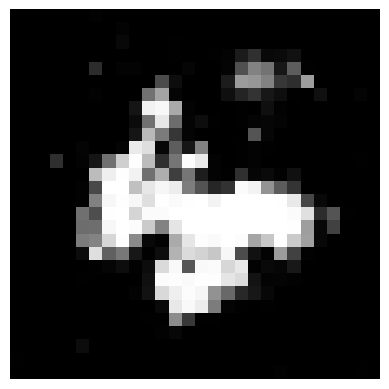}
			\includegraphics[width=0.055\textwidth]{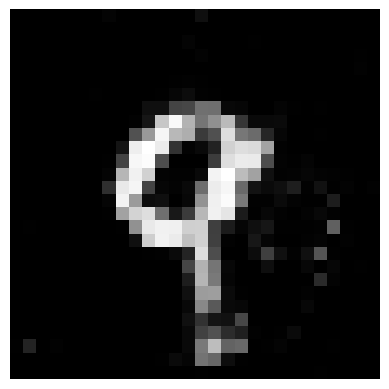}
			\includegraphics[width=0.055\textwidth]{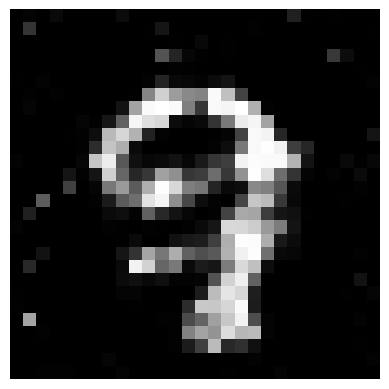}
			\includegraphics[width=0.055\textwidth]{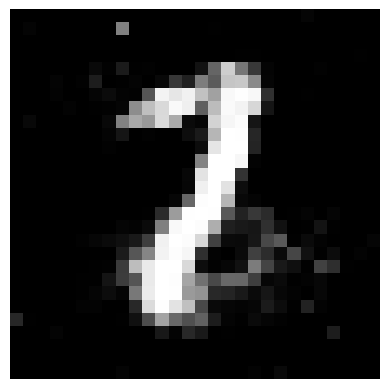}
			\includegraphics[width=0.055\textwidth]{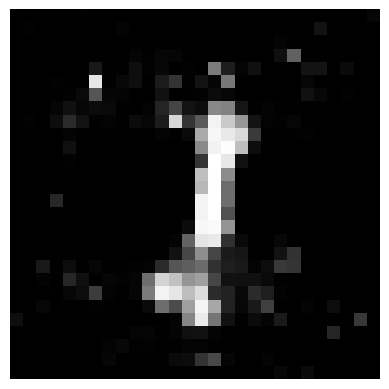}
			\includegraphics[width=0.055\textwidth]{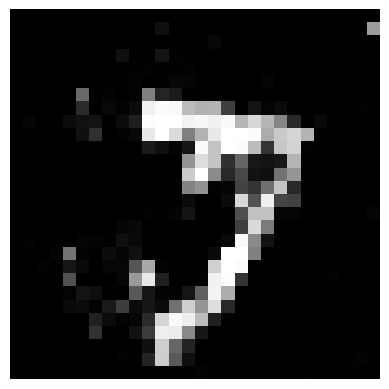}
			\includegraphics[width=0.055\textwidth]{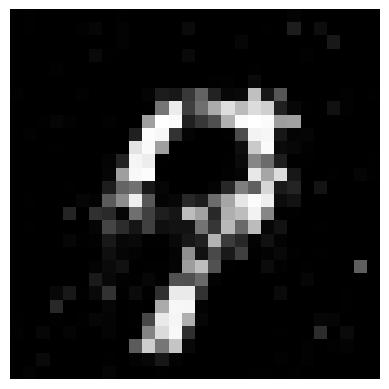}
			\includegraphics[width=0.055\textwidth]{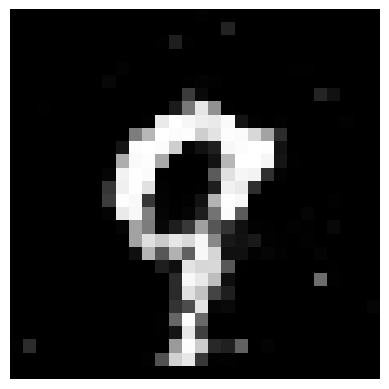}
			\includegraphics[width=0.055\textwidth]{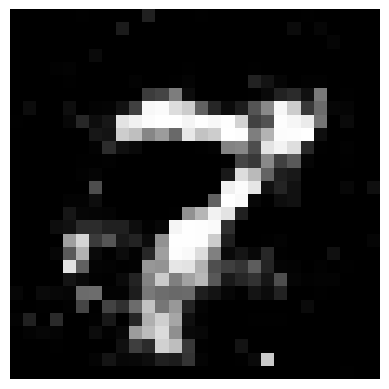}
			\includegraphics[width=0.055\textwidth]{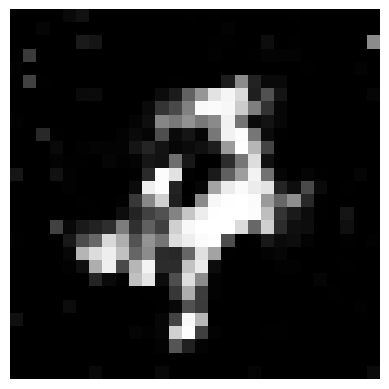}
			\includegraphics[width=0.055\textwidth]{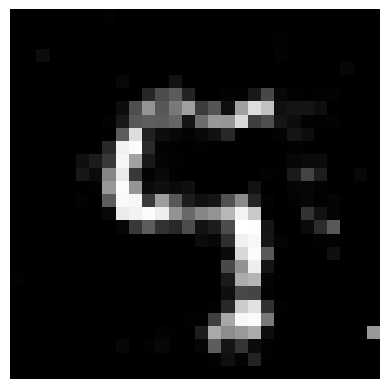}
			\includegraphics[width=0.055\textwidth]{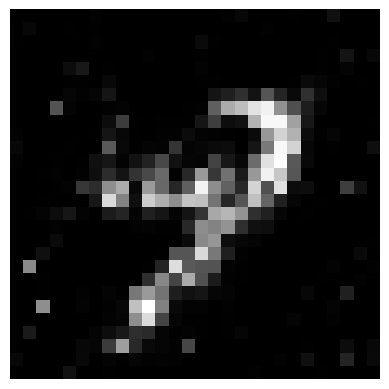}
			\includegraphics[width=0.055\textwidth]{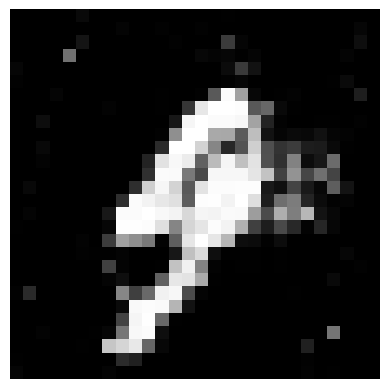}
			\includegraphics[width=0.055\textwidth]{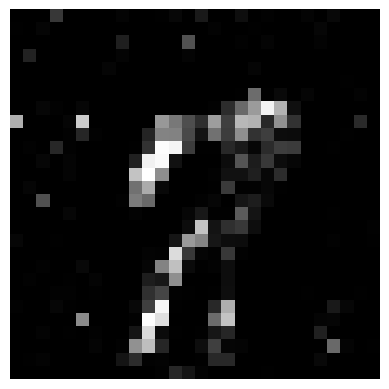}
		\end{tabular}
	}\\
	\subfloat[$\alpha=5$]{
	\begin{tabular}[b]{c}
		\includegraphics[width=0.055\textwidth]{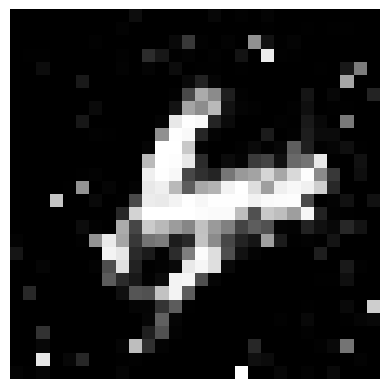}
		\includegraphics[width=0.055\textwidth]{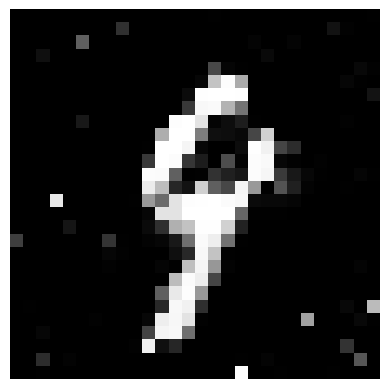}
		\includegraphics[width=0.055\textwidth]{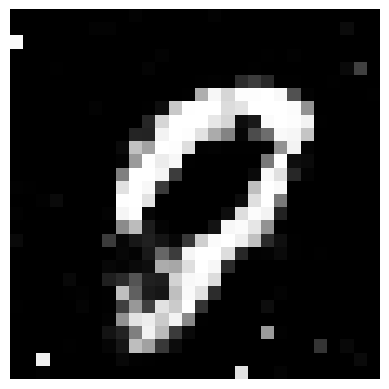}
		\includegraphics[width=0.055\textwidth]{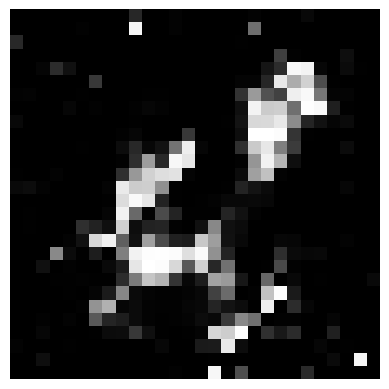}
		\includegraphics[width=0.055\textwidth]{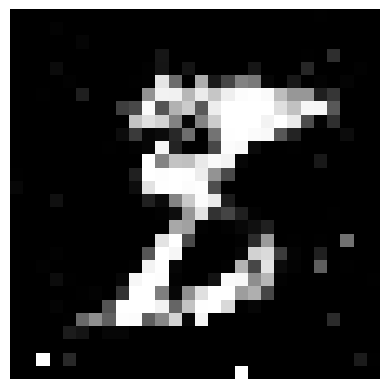}
		\includegraphics[width=0.055\textwidth]{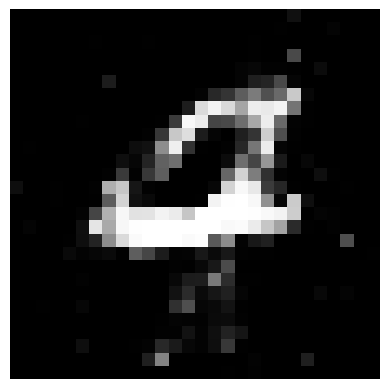}
		\includegraphics[width=0.055\textwidth]{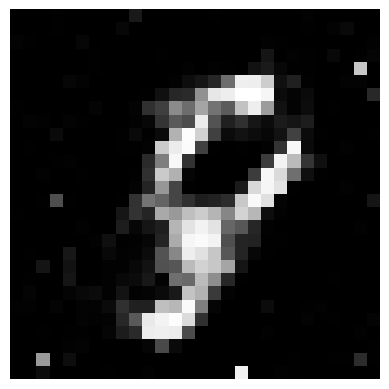}
		\includegraphics[width=0.055\textwidth]{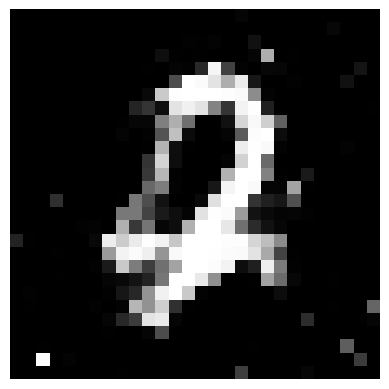}
		\includegraphics[width=0.055\textwidth]{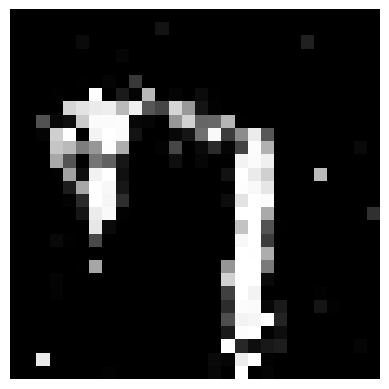}
		\includegraphics[width=0.055\textwidth]{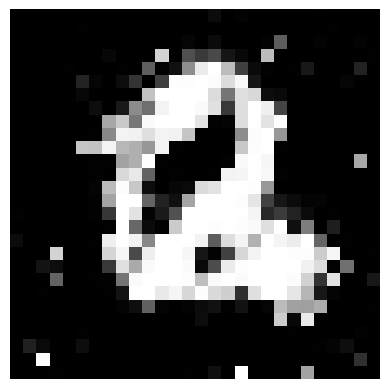}
		\includegraphics[width=0.055\textwidth]{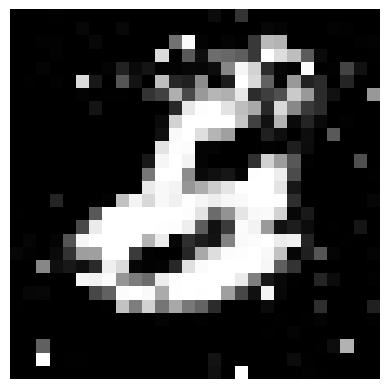}
		\includegraphics[width=0.055\textwidth]{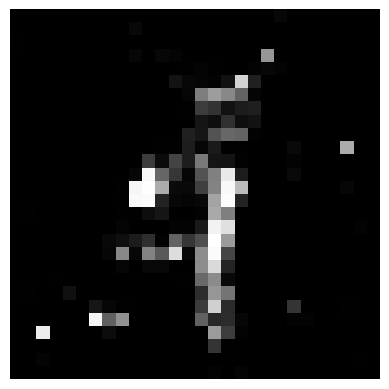}
		\includegraphics[width=0.055\textwidth]{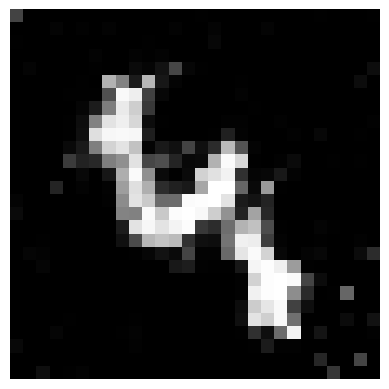}
		\includegraphics[width=0.055\textwidth]{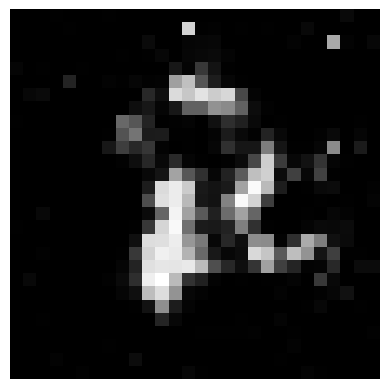}
		\includegraphics[width=0.055\textwidth]{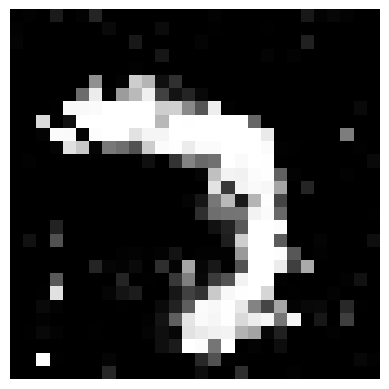}
		\includegraphics[width=0.055\textwidth]{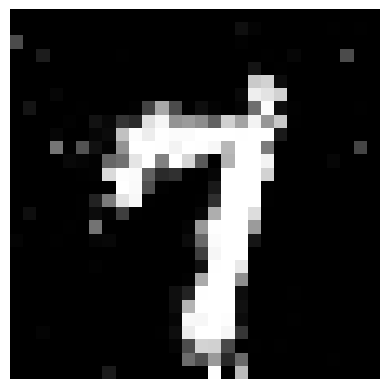}\\
		\includegraphics[width=0.055\textwidth]{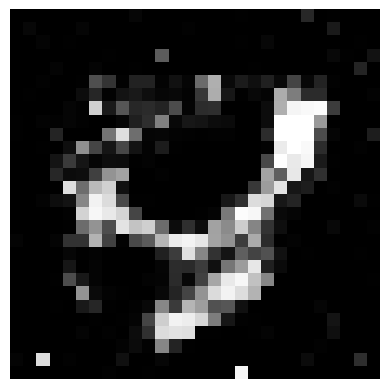}
		\includegraphics[width=0.055\textwidth]{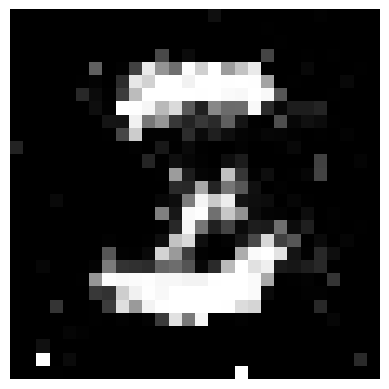}
		\includegraphics[width=0.055\textwidth]{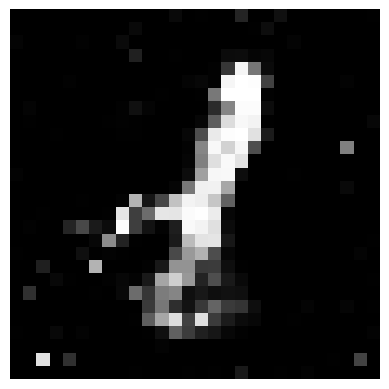}
		\includegraphics[width=0.055\textwidth]{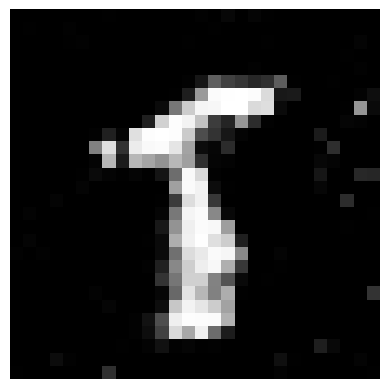}
		\includegraphics[width=0.055\textwidth]{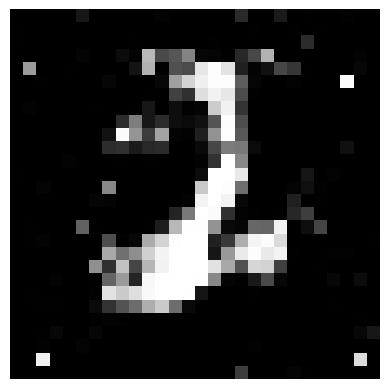}
		\includegraphics[width=0.055\textwidth]{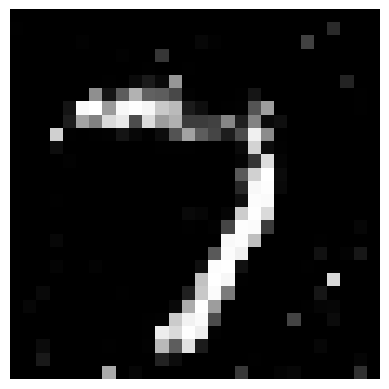}
		\includegraphics[width=0.055\textwidth]{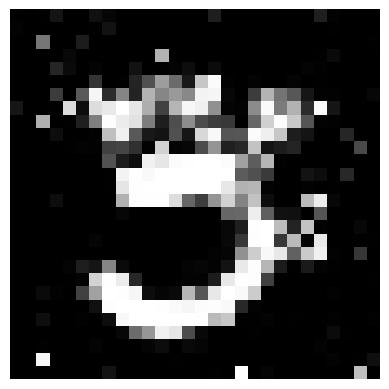}
		\includegraphics[width=0.055\textwidth]{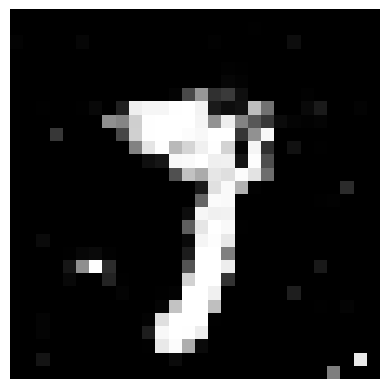}
		\includegraphics[width=0.055\textwidth]{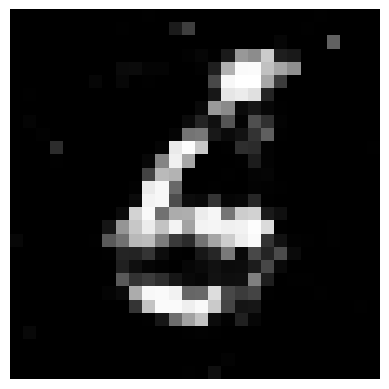}
		\includegraphics[width=0.055\textwidth]{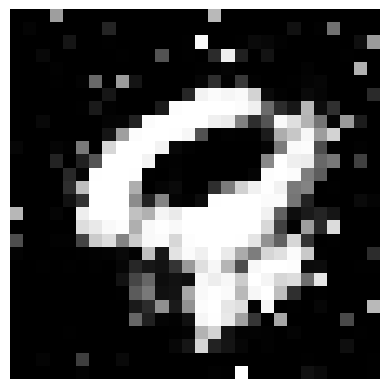}
		\includegraphics[width=0.055\textwidth]{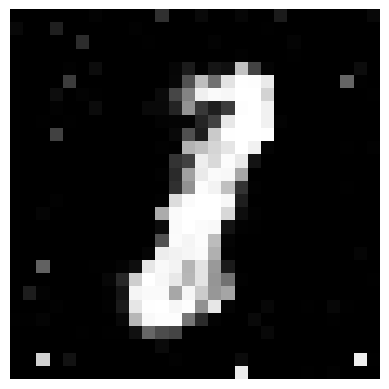}
		\includegraphics[width=0.055\textwidth]{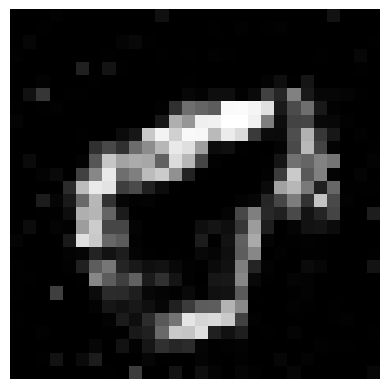}
		\includegraphics[width=0.055\textwidth]{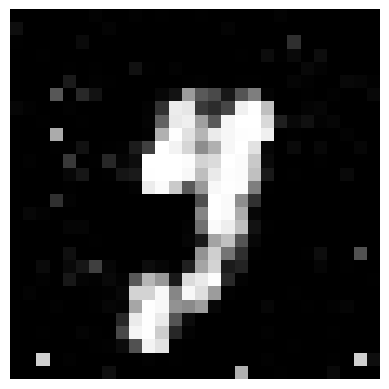}
		\includegraphics[width=0.055\textwidth]{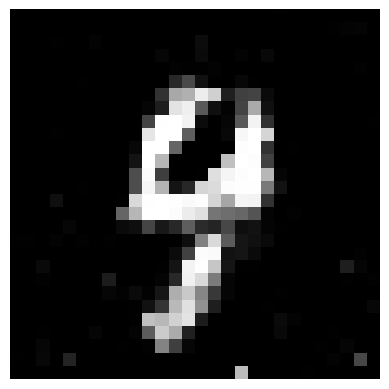}
		\includegraphics[width=0.055\textwidth]{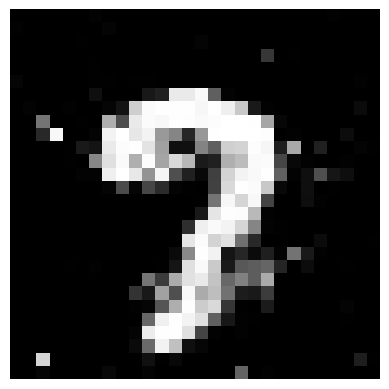}
		\includegraphics[width=0.055\textwidth]{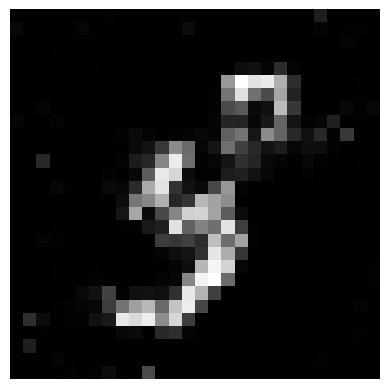}
	\end{tabular}
	}
	\caption{MNIST digits generated by $\alpha$-GAN for different $\alpha$.}
	\label{fig:MNIST}
\end{figure}

\section{Related Work}
\label{sec:relate}
Before this paper, there have been several attempts to use R\'{e}nyi measures to generalize the vanilla GAN~\cite{Goodfellow2014_VanillaGAN}.
In~\cite{Bhatia2021_KthOrder_Renyi_GAN}, R\'{e}nyi cross entropy is defined as $H_\alpha (P_{X}, \hat{P}_{X}) = \frac{1}{1-\alpha} \log \E_{X \sim P_{X}} [ ( \hat{P}_{X}^{-1}(X) )^{1-\alpha}  ]$ based on the fact that the R\'{e}nyi entropy is an $f$-mean measure for $f(t) = \exp((1-\alpha)t)$.
The main issue with this definition is that $\min_{\hat{P}_{X}} H_\alpha (P_{X}, \hat{P}_{X}) \neq H_{\alpha}(P_{X})$ and yet the resulting optimal discriminator $D^*(x)$ at $\alpha=1$ does not coincide with the vanilla GAN. 
This is why this R\'{e}nyi cross entropy is only used for generator loss function, not the discriminator, which is defined as $V_{\alpha}(D, P_g) = -H_\alpha (P_r, D(X)) -H_\alpha (P_g, 1-D(X)) $ in a sense close to the Jensen-Shannon divergence interpretation, as in vanilla GAN~\cite{Goodfellow2014_VanillaGAN}, not the conditional cross entropy. There are some experimental results in~\cite{Bhatia2021_KthOrder_Renyi_GAN} showing the performance in range $\alpha \in (0,3)$. But, it is not specific to the region $\alpha \in (0,1)$ and there is no analysis on this aspect. 

In~\cite{Sarraf2021_RGAN}, R\'{e}nyi cross entropy is defined as $H_\alpha (P_{X}, \hat{P}_{X}) = \frac{1}{1-\alpha} \log \E_{X \sim P_{X_\alpha}} [ ( \hat{P}_{X}^{-1}(X) )^{1-\alpha}  ]$, where $P_{X_\alpha}(x) = \frac{P_{X}^\alpha(x)}{\sum_{x} P_{X}^\alpha(x)}$ is the $\alpha$-scaled probability. This definition results in a decomposition $H_\alpha (P_{X}, \hat{P}_{X}) = D_\alpha (P_{X}, \hat{P}_{X}) + H_\alpha (P_{X})$ such that $\min_{\hat{P}_{X}} H_\alpha (P_{X}, \hat{P}_{X}) = H_{\alpha}(P_{X})$. The value function $V_{\alpha}(D, P_g) = -H_\alpha (P_r, D(X)) -H_\alpha (P_g, 1-D(X)) $ is used for training both discriminator and generator. It is shown that value $\alpha = 0.1$ generates better FID score. However, there is no explanation for this performance improvement. This result actually implies the intriguing region $\alpha \in (0,1)$ is worth exploring. 

On the other hand, the Jensen-Shannon divergence has been generalized to $f$-divergence in~\cite{Nowozin2016_fGAN} to propose the $f$-GAN. This is based on a variational representation of $f$-divergence $D_{f}(P_X, \hat{P}_{X}) = \int \hat{P}_{X}(x) \sup_{t \in \text{dom}_{f^*}} \Set{ t\frac{P_{X}(x)}{\hat{P}_{X}(x)} - f^*(t) } \dif x$ discovered in~\cite{Nguyen2010_FDiv}, where $f^*$ is the Fenchel conjugate of $f$. There are various options of $f$ function, e.g., Kullback-Leibler divergence, Pearson $\chi^2$, etc. The $f$ function in $f$-GAN is supposed to be chosen in advance, which may affect the performance. However, it is not sure which $f$ can produce better learning speed. On the other hand, choosing $f$ as the $\alpha$-divergence will result in a loss function parameterized by $\alpha$. But, it is different from R\'{e}nyi divergence, in particular, in region $\alpha \in (0,1)$.  

Instead of probability divergence, the authors in~\cite{Arjovsky2017_WGAN} adopted the Wasserstein metric $W_\alpha$ (more precisely, the earth mover distance $W_1$) and formulated a WGAN.  
The reason for WGAN, as explained in~\cite[Example~1]{Arjovsky2017_WGAN}, is that the learning will not proceed by taking the gradient in the probability space in some special cases. 
However, it is pointed out in~\cite{Gulrajani2017_ImprWGAN} that the weight clipping in WGAN may affect the convergence. Unlike WGAN, the $\alpha$-GAN proposed in this paper still measures distance or divergence between two probabilities (by R\'{e}nyi cross entropy).  
Though Wasserstein metric is not defined in $\alpha \in (0,1)$, the study in this paper suggests order lower than $1$ or high order Wasserstein is worth exploring. 

Recent studies in~\cite{Welfert2024,Veiner2024AlphaGAN,Kurri2022ISIT_AlphaGAN,Kurri2021ITW_AlphaGAN} propose $\alpha$-GAN with value function $V_\alpha(D, P_g) = \frac{\alpha}{\alpha-1} ( \E_{X \sim P_r} [D^{\frac{\alpha-1}{\alpha}}(X)]  + \E_{X \sim P_g} [(1-D(x))^{\frac{\alpha-1}{\alpha}}] )$.
This definition is borrowed from an information leakage measure in~\cite{Liao2019_AlphaLeak} based on a statistical estimation setting. It generalizes the class probability estimation (CPE) loss for binary classification \cite{Reid2010_CPE} and coincides with the Arimoto divergence, an $f$-divergence measure. 
The Arimoto divergence was first proposed in \cite{Osterreicher1996} for region $\alpha \in [1,\infty)$ only, and later extended to $\alpha \in (0,\infty)$~\cite{Liese2006}.  
As the objective function is not defined at $\alpha = 0$, this value is specifically left out.  
The studies~\cite{Welfert2024,Veiner2024AlphaGAN,Kurri2022ISIT_AlphaGAN,Kurri2021ITW_AlphaGAN} show that tuning parameter $\alpha$ may help deal with model collapse and vanishing gradient. But, there is no specific exploration on region $\alpha \in (0,1)$. 
One of the reason could be that the value function is not well defined as a mean measure. In fact, the coefficient $\frac{\alpha}{\alpha-1}$ should be exponential index instead, e.g., $V_\alpha(D, P_g) = ( \E_{X \sim P_r} [D^{\frac{\alpha-1}{\alpha}}(X)] )^{\frac{\alpha}{\alpha-1}}  + ( \E_{X \sim P_g} [(1-D(X))^{\frac{\alpha-1}{\alpha}}] )^{\frac{\alpha}{\alpha-1}}$, so that it is a weighted power mean measure well defined in the overall R\'{e}nyi order range $\alpha \in [0,\infty)$. Taking the logarithm of this measure is exactly the R\'{e}nyi cross entropy we used in this paper.

\section{Discussion}
\label{sec:discuss}

This paper shows one of the advantages for introducing R\'{e}nyi order $\alpha$ to the loss function in GAN: accelerated gradient. 
As the cross entropy is widely used in machine learning, the results, e.g., function~\eqref{eq:powermean} can be extended to other problems. The logarithm can be omitted to make the optimization easier. For example, define the $\alpha$-loss function by $L_{\alpha}(P_{Y|X}, \hat{P}_{Y|X}) = ( \E_{XY \sim P_{XY}} [  \hat{P}_{Y|X}^{\frac{\alpha-1}{\alpha}} (Y|X)  ]  )^{\frac{\alpha}{1-\alpha} } = (\frac{1}{N} \sum_{x,y} P_{Y|X}(y|x) \hat{P}_{Y|X}^\frac{\alpha-1}{\alpha}(y|x) )^{\frac{\alpha}{1-\alpha}}$, where $y$ is the label and $x$ is the sample. The decision variable $\hat{P}_{Y|X}$ can be sigmoid or softmax. 
Knowing $\hat{P}_{Y|X}^{-1}(y|x)$ measures the uncertainty if sample $x$ is classified to label $y$, $L_{\alpha}(P_{Y|X}, \hat{P}_{Y|X})$ is the $\frac{1-\alpha}{\alpha}$-order weighted power mean of uncertainty incurred by $\hat{P}_{Y|X}$ wrt the true or empirical distribution $P_{Y|X}$. 
The logarithm of $L_{\alpha}(P_{Y|X}, \hat{P}_{Y|X})$ is the R\'{e}nyi cross entropy, which becomes Shannon cross entropy at $\alpha = 1$.

There is an attempt in~\cite{Sypherd2019,Sypherd2022} where $\alpha$-loss is defined as $L_{\alpha}(P_{Y|X}, \hat{P}_{Y|X}) = \frac{\alpha}{1-\alpha} \frac{1}{N} \sum_{x,y} P_{Y|X}(y|x) \hat{P}_{Y|X}^\frac{\alpha-1}{\alpha}(y|x)$ for $\alpha \in (0,\infty)$. It is obviously based on the Arimoto divergence like~\cite{Welfert2024,Veiner2024AlphaGAN,Kurri2022ISIT_AlphaGAN,Kurri2021ITW_AlphaGAN}, where the loss at $\alpha = 0$ is not well defined. 
The results in~\cite{Sypherd2019,Sypherd2022} show that tuning $\alpha$ can help deal with problems such as label flips and imbalanced classes. They are also the expected performance improvements, if we formulate a more formal weighted power mean loss. 

\section{Conclusion}

We proposed an $\alpha$-GAN model where the value function is a R\'{e}nyi cross entropy. We showed that when $\alpha = 1$, $1$-GAN is equivalent to vanilla GAN, where the value function is exactly the binary cross entropy. 
We derive the optimal saddle point solution for the proposed $\alpha$-GAN and revealed that the exponentially increasing gradient when $\alpha \to 0$. We ran two experiments and showed that convergence is faster when $\alpha \in (0,1)$.

This paper initiates the study on the role of $\alpha$ in GAN training. While the analysis shows accelerated gradient in small values of $\alpha$, the study can be deepened to see whether other problems, e.g., model collapse, can be treated by varying $\alpha$. 
The other problem concerns the concavity of R\'{e}nyi entropy in $\min_{P_g}  V_\alpha (D^*,P_g) = - \max_{P_g} H_{\alpha} (P_{Z|X})$ for training the generator. As $H_{\alpha} (P_{Z|X})$ is only concave when $\alpha \geq 1$, the convergence for $\alpha < 1$ needs to be explored. Another choice is to adopt R\'{e}nyi divergence, which is convex in the second argument for the entire $\alpha \in [0,\infty)$~\cite[Theorem~12]{Erven2014_JOURNAL}.

{
	\small
	\bibliographystyle{nips}
	\bibliography{BIB1}
}


%
%
%
%
%


\newpage

\appendix

\section{Technical Appendices and Supplementary Material}

For experiment settings described in Section~\ref{sec:Exp}, Figures~\ref{fig:SupHist0.1}, \ref{fig:SupHist0.5}, \ref{fig:SupHist1} and~\ref{fig:SupHist5} show the convergence performance when our proposed $\alpha$-GAN is trained for learning $1$D Gaussian distribution for $\alpha=0.1$, $\alpha=0.5$, $\alpha=1$ and $\alpha=5$, respectively. The figures display real and generated distributions for every $1000$ epochs, which complement Figure~\ref{fig:1DGaussian}. 

\begin{figure}[h]
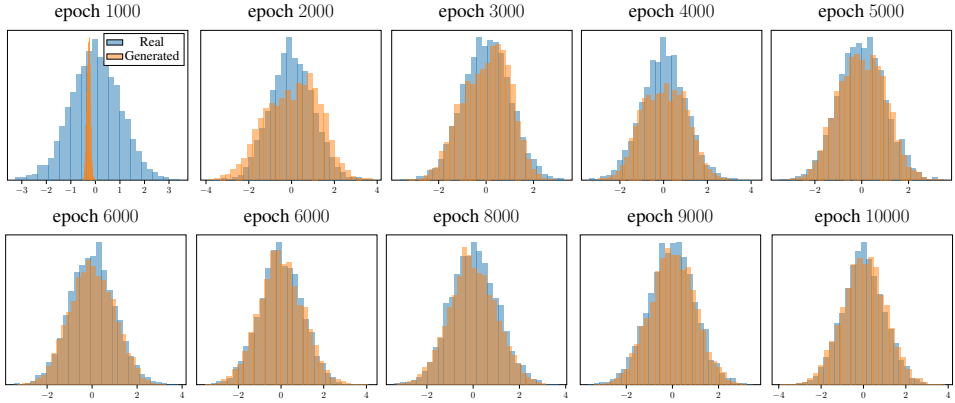

	\begin{tabular}[b]{c}
	\scalebox{0.35}{\input{figures/1DGaussHist_0.1_1000.tex}}
	\scalebox{0.35}{\input{figures/1DGaussHist_0.1_2000Sup.tex}}
	\scalebox{0.35}{\input{figures/1DGaussHist_0.1_3000.tex}}
	\scalebox{0.35}{\input{figures/1DGaussHist_0.1_4000.tex}} 
	\scalebox{0.35}{\input{figures/1DGaussHist_0.1_5000.tex}}\\
	\scalebox{0.35}{\input{figures/1DGaussHist_0.1_6000.tex}}
	\scalebox{0.35}{\input{figures/1DGaussHist_0.1_7000.tex}}
	\scalebox{0.35}{\input{figures/1DGaussHist_0.1_8000.tex}} 
	\scalebox{0.35}{\input{figures/1DGaussHist_0.1_9000.tex}}
	\scalebox{0.35}{\input{figures/1DGaussHist_0.1_10000.tex}}
	\end{tabular}
	\caption{Training $\alpha$-GAN for learning $1$D Gaussian for $\alpha=0.1$}
	\label{fig:SupHist0.1}
\end{figure}

\begin{figure}[h]
	\begin{tabular}[b]{c}
	\scalebox{0.35}{\input{figures/1DGaussHist_0.5_1000.tex}}
	\scalebox{0.35}{\input{figures/1DGaussHist_0.5_2000Sup.tex}}
	\scalebox{0.35}{\input{figures/1DGaussHist_0.5_3000.tex}}
	\scalebox{0.35}{\input{figures/1DGaussHist_0.5_4000.tex}} 
	\scalebox{0.35}{\input{figures/1DGaussHist_0.5_5000.tex}}\\
	\scalebox{0.35}{\input{figures/1DGaussHist_0.5_6000.tex}}
	\scalebox{0.35}{\input{figures/1DGaussHist_0.5_7000.tex}}
	\scalebox{0.35}{\input{figures/1DGaussHist_0.5_8000.tex}} 
	\scalebox{0.35}{\input{figures/1DGaussHist_0.5_9000.tex}}
	\scalebox{0.35}{\input{figures/1DGaussHist_0.5_10000.tex}}
	\end{tabular}
	\caption{Training $\alpha$-GAN for learning $1$D Gaussian for $\alpha=0.5$}
	\label{fig:SupHist0.5}
\end{figure}

\begin{figure}[h]
	\begin{tabular}[b]{c}
	\scalebox{0.35}{\input{figures/1DGaussHist_1_1000.tex}}
	\scalebox{0.35}{\input{figures/1DGaussHist_1_2000Sup.tex}}
	\scalebox{0.35}{\input{figures/1DGaussHist_1_3000.tex}}
	\scalebox{0.35}{\input{figures/1DGaussHist_1_4000.tex}} 
	\scalebox{0.35}{\input{figures/1DGaussHist_1_5000.tex}}\\
	\scalebox{0.35}{\input{figures/1DGaussHist_1_6000.tex}}
	\scalebox{0.35}{\input{figures/1DGaussHist_1_7000.tex}}
	\scalebox{0.35}{\input{figures/1DGaussHist_1_8000.tex}} 
	\scalebox{0.35}{\input{figures/1DGaussHist_1_9000.tex}}
	\scalebox{0.35}{\input{figures/1DGaussHist_1_10000.tex}}
	\end{tabular}
	\caption{Training $\alpha$-GAN for learning $1$D Gaussian for $\alpha=1$}
	\label{fig:SupHist1}
\end{figure}

\begin{figure}[h]
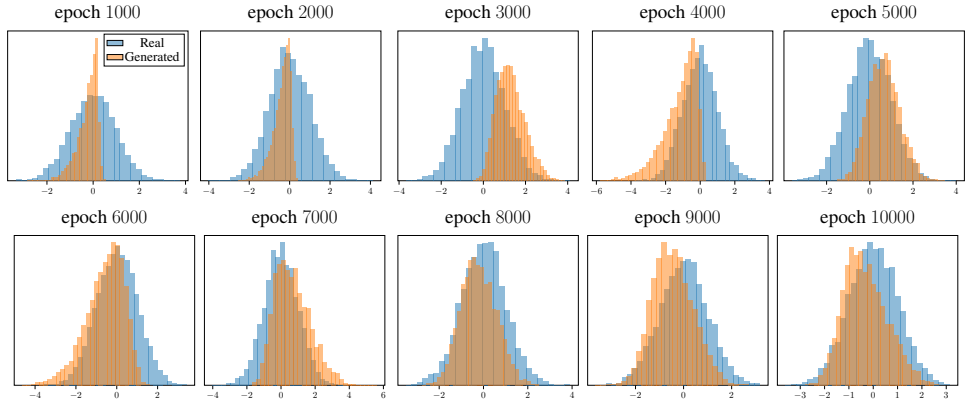

	\begin{tabular}[b]{c}
	\scalebox{0.35}{\input{figures/1DGaussHist_5_1000.tex}}
	\scalebox{0.35}{\input{figures/1DGaussHist_5_2000Sup.tex}}
	\scalebox{0.35}{\input{figures/1DGaussHist_5_3000.tex}}
	\scalebox{0.35}{\input{figures/1DGaussHist_5_4000.tex}} 
	\scalebox{0.35}{\input{figures/1DGaussHist_5_5000.tex}}\\
	\scalebox{0.35}{\input{figures/1DGaussHist_5_6000.tex}}
	\scalebox{0.35}{\input{figures/1DGaussHist_5_7000.tex}}
	\scalebox{0.35}{\input{figures/1DGaussHist_5_8000.tex}} 
	\scalebox{0.35}{\input{figures/1DGaussHist_5_9000.tex}}
	\scalebox{0.35}{\input{figures/1DGaussHist_5_10000.tex}}
	\end{tabular}
	\caption{Training $\alpha$-GAN for learning $1$D Gaussian for $\alpha=5$}
	\label{fig:SupHist5}
\end{figure}

Figure~\ref{fig:SupD0.1}, \ref{fig:SupD0.5}, \ref{fig:SupD1} and~\ref{fig:SupD5} show the convergence of discriminator's soft decision $D(x)$ for every $1000$ epochs for $\alpha=0.1$, $\alpha=0.5$, $\alpha=1$ and $\alpha=5$, respectively. They complement Figure~\ref{fig:1DGaussianAna}. It can be seen that when $\alpha = 0.1$, $D$ can reach uniform distribution $D(x) = 0.5$ for all $x$ very quickly. 

\begin{figure}[h]
	\begin{tabular}[b]{c}
		\scalebox{0.33}{\input{figures/1DGaussD_0.1_1000.tex}}
		\scalebox{0.33}{\input{figures/1DGaussD_0.1_2000.tex}}
		\scalebox{0.33}{\input{figures/1DGaussD_0.1_3000.tex}}
		\scalebox{0.33}{\input{figures/1DGaussD_0.1_4000.tex}} 
		\scalebox{0.33}{\input{figures/1DGaussD_0.1_5000.tex}}\\
		\scalebox{0.33}{\input{figures/1DGaussD_0.1_6000.tex}}
		\scalebox{0.33}{\input{figures/1DGaussD_0.1_7000.tex}}
		\scalebox{0.33}{\input{figures/1DGaussD_0.1_8000.tex}} 
		\scalebox{0.33}{\input{figures/1DGaussD_0.1_9000.tex}}
		\scalebox{0.33}{\input{figures/1DGaussD_0.1_10000Sup.tex}}
	\end{tabular}
	\caption{The change discriminator's soft decision $D(x)$ in Figure~\ref{fig:SupHist0.1} for $\alpha=0.1$.}
	\label{fig:SupD0.1}
\end{figure}

\begin{figure}[h]
	\begin{tabular}[b]{c}
		\scalebox{0.33}{\input{figures/1DGaussD_0.5_1000.tex}}
		\scalebox{0.33}{\input{figures/1DGaussD_0.5_2000.tex}}
		\scalebox{0.33}{\input{figures/1DGaussD_0.5_3000.tex}}
		\scalebox{0.33}{\input{figures/1DGaussD_0.5_4000.tex}} 
		\scalebox{0.33}{\input{figures/1DGaussD_0.5_5000.tex}}\\
		\scalebox{0.33}{\input{figures/1DGaussD_0.5_6000.tex}}
		\scalebox{0.33}{\input{figures/1DGaussD_0.5_7000.tex}}
		\scalebox{0.33}{\input{figures/1DGaussD_0.5_8000.tex}} 
		\scalebox{0.33}{\input{figures/1DGaussD_0.5_9000.tex}}
		\scalebox{0.33}{\input{figures/1DGaussD_0.5_10000Sup.tex}}
	\end{tabular}
	\caption{The change discriminator's soft decision $D(x)$ in Figure~\ref{fig:SupHist0.5} for $\alpha=0.5$.}
	\label{fig:SupD0.5}
\end{figure}

\begin{figure}[h]
	\begin{tabular}[b]{c}
		\scalebox{0.33}{\input{figures/1DGaussD_1_1000.tex}}
		\scalebox{0.33}{\input{figures/1DGaussD_1_2000.tex}}
		\scalebox{0.33}{\input{figures/1DGaussD_1_3000.tex}}
		\scalebox{0.33}{\input{figures/1DGaussD_1_4000.tex}} 
		\scalebox{0.33}{\input{figures/1DGaussD_1_5000.tex}}\\
		\scalebox{0.33}{\input{figures/1DGaussD_1_6000.tex}}
		\scalebox{0.33}{\input{figures/1DGaussD_1_7000.tex}}
		\scalebox{0.33}{\input{figures/1DGaussD_1_8000.tex}} 
		\scalebox{0.33}{\input{figures/1DGaussD_1_9000.tex}}
		\scalebox{0.33}{\input{figures/1DGaussD_1_10000Sup.tex}}
	\end{tabular}
	\caption{The change discriminator's soft decision $D(x)$ in Figure~\ref{fig:SupHist1} for $\alpha=1$.}
	\label{fig:SupD1}
\end{figure}

\begin{figure}[h]
	\begin{tabular}[b]{c}
		\scalebox{0.33}{\input{figures/1DGaussD_5_1000.tex}}
		\scalebox{0.33}{\input{figures/1DGaussD_5_2000.tex}}
		\scalebox{0.33}{\input{figures/1DGaussD_5_3000.tex}}
		\scalebox{0.33}{\input{figures/1DGaussD_5_4000.tex}} 
		\scalebox{0.33}{\input{figures/1DGaussD_5_5000.tex}}\\
		\scalebox{0.33}{\input{figures/1DGaussD_5_6000.tex}}
		\scalebox{0.33}{\input{figures/1DGaussD_5_7000.tex}}
		\scalebox{0.33}{\input{figures/1DGaussD_5_8000.tex}} 
		\scalebox{0.33}{\input{figures/1DGaussD_5_9000.tex}}
		\scalebox{0.33}{\input{figures/1DGaussD_5_10000Sup.tex}}
	\end{tabular}
	\caption{The change discriminator's soft decision $D(x)$ in Figure~\ref{fig:SupHist5} for $\alpha=5$.}
	\label{fig:SupD5}
\end{figure}

In Table~\ref{tab:MNIST}, we list the FID scores correspond to the experimental results in Figure~\ref{fig:MNIST}. 

\begin{table}[H]
  \caption{FID score on MNIST dataset}
  \label{tab:MNIST}
  \centering
  \begin{tabular}{ll}
	    \toprule
	    $\alpha$    & FID score  \\
	    \midrule
	    $0.1$ & $190.48$     \\
	    $0.5$ & $213.06$     \\
	    $1$   & $221.10$  \\
	    $5$   & $241.83$  \\
	    \bottomrule
	  \end{tabular}
\end{table}

\end{document}